\documentclass[letterpaper]{article} 
\usepackage{aaai2026}  
\usepackage{times}  
\usepackage{helvet}  
\usepackage{courier}  
\usepackage[hyphens]{url}  
\usepackage{graphicx} 
\urlstyle{rm} 
\usepackage{natbib}  
\usepackage{caption} 
\frenchspacing  
\setlength{\pdfpagewidth}{8.5in}  
\setlength{\pdfpageheight}{11in}  
%

%
%
\pdfinfo{
/TemplateVersion (2026.1)
}

\setcounter{secnumdepth}{0} 

%


\usepackage{amsfonts}
\usepackage{subfigure}
\usepackage{amsmath}
\usepackage{bm}
\usepackage[ruled, vlined, linesnumbered]{algorithm2e}
\usepackage{booktabs}
\usepackage{xspace}
\usepackage{color}
\usepackage[switch]{lineno}
\usepackage{amsthm}
\makeatletter
\frenchspacing
\DeclareRobustCommand\onedot{\futurelet\@let@token\@onedot}
\def\@onedot{\ifx\@let@token.\else.\null\fi\xspace}
 
\def\ie{\emph{i.e}\onedot}

\newtheorem{theorem}{Theorem}

\makeatother

\makeatletter
\renewenvironment{align*}{%
  \linenomath 
  \start@align\@ne\st@rredtrue\m@ne
}{%
  \endalign
  \endlinenomath 
}
\makeatother


\title{Deep (Predictive) Discounted Counterfactual Regret Minimization}

\author{
    Hang Xu$^{1, 2}$,
    Kai Li$^{1, 2,}$\thanks{Corresponding author.},
    Haobo Fu$^{6}$,
    Qiang Fu$^{6}$,
    Junliang Xing$^{5}$,
    Jian Cheng$^{1,3,4}$\\
}
\affiliations {
$^1$C$^2$DL, Institute of Automation, Chinese Academy of Sciences\\
$^2$School of Artificial Intelligence, University of Chinese Academy of Sciences\\
$^3$AiRiA~~~$^4$Maicro.ai~~~$^5$Tsinghua University~~~$^6$Tencent AI Lab \\
\{xuhang2020, kai.li, jian.cheng\}@ia.ac.cn, 
\{haobofu, leonfu\}@tencent.com, jlxing@tsinghua.edu.cn
}



\begin{document}

\maketitle

\begin{abstract}
    Counterfactual regret minimization (CFR) is a family of algorithms for effectively solving imperfect-information games. 
    To enhance CFR's applicability in large games, researchers use neural networks to approximate its behavior. 
    However, existing methods are mainly based on vanilla CFR and struggle to effectively integrate more advanced CFR variants.
    In this work, we propose an efficient model-free neural CFR algorithm, overcoming the limitations of existing methods in approximating advanced CFR variants. At each iteration, it collects variance-reduced sampled advantages based on a value network, fits cumulative advantages by bootstrapping, and applies discounting and clipping operations to simulate the update mechanisms of advanced CFR variants.
    Experimental results show that, compared with model-free neural algorithms, it exhibits faster convergence in typical imperfect-information games and demonstrates stronger adversarial performance in a large poker game.
\end{abstract}

\begin{links}
    \link{Code}{https://github.com/rpSebastian/DeepPDCFR}
\end{links}

\section{Introduction}

Imperfect-information games (IIGs) serve as a foundational framework for modeling strategic interactions among multiple players where certain information remains hidden. 
Addressing these games poses significant challenges, as it requires reasoning under uncertainty about opponents' private information. 
Such hidden information is pervasive in real-world scenarios, such as negotiation~\cite{gratch2016misrepresentation}, security~\cite{lisy2016counterfactual}, medical treatment~\cite{sandholm2015steering}, and recreational games~\cite{brown2019superhuman}, making research on IIGs theoretically and practically crucial.  
The primary goal in solving IIGs is to compute an (approximate) Nash equilibrium (NE)~\cite{nash1950equilibrium}---a strategy profile where no player can gain by unilaterally altering its strategy.

Similar to most research on solving IIGs, we focus on learning an NE in two-player zero-sum IIGs. We also assume the model-free setting, where the algorithm does not have an exact simulator of the game and only samples episodes from the game. When the perfect game model is available, the family of counterfactual regret minimization (CFR) algorithms~\cite{zinkevich_regret_2007,tammelin_solving_2014,brown_solving_2019,farina2021faster} is one of the most successful approaches for computing an NE, which iteratively minimizes the cumulative counterfactual regrets of both players so that the average strategy profile converges to an NE in two-player zero-sum IIGs. Due to its robust theoretical foundation and strong empirical performance, CFR and its variants have driven several significant advancements in this field~\cite{bowling_heads-up_2015,moravvcik2017deepstack,brown2018superhuman,brown2019superhuman}. 

When lacking a game model, outcome-sampling Monte Carlo CFR (OS-MCCFR)~\cite{lanctot2009monte} has been proposed to approximate the counterfactual regrets in each iteration by sampling episodes from the game. To further scale the algorithm to large IIGs, many novel neural CFR variants have been developed. OS-DeepCFR~\cite{brown2019deep} employs function approximation with deep neural networks to approximate the cumulative counterfactual regrets instead of tabular storage. DREAM~\cite{steinberger2020dream}, which is built upon variance-reduced MCCFR~\cite{schmid2019variance}, employs a learned value function as a baseline to reduce the high variance in estimating cumulative counterfactual regrets. ESCHER~\cite{ESCHER} uses a history value function and a fixed sampling strategy for the updating player to avoid using importance sampling.

Although these neural approaches have greatly accelerated CFR in large IIGs, they primarily focus on approximating the behavior of vanilla CFR or the variant LinearCFR~\cite{brown_solving_2019}. Besides, they rely on a large replay buffer to store experiences, which is used to refit the cumulative counterfactual regrets each iteration. Recent advancements in tabular CFR have demonstrated that novel CFR variants can achieve significantly faster convergence compared to vanilla CFR and LinearCFR. To reduce the cost of picking wrong actions, CFR+~\cite{tammelin_solving_2014} clips negative cumulative counterfactual regrets in each iteration, Discounted CFR (DCFR)~\cite{farina2021faster} discounts the cumulative counterfactual regrets in each iteration, and DCFR+~\cite{AutoCFR} combines the key insights of CFR+ and DCFR to achieve faster convergence. Predictive CFR+ (PCFR+)~\cite{farina2021faster} leverages the predictability of the counterfactual regrets in each iteration to accelerate the convergence speed. PDCFR+~\cite{PDCFRPlus} integrates PCFR+ and DCFR in a principled manner, showcasing faster convergence in non-poker IIGs. These tabular variants provide a promising avenue for neural CFR to further unlock its potential in achieving faster convergence.

To this end, we propose novel model-free neural CFR variants, Variance Reduction Deep DCFR+ (VR-DeepDCFR+) and Variance Reduction Deep PDCFR+ (VR-DeepPDCFR+). These algorithms employ deep neural networks to approximate the behavior of DCFR+ and PDCFR+, respectively, while leveraging learned baseline functions to mitigate the high variance caused by episode sampling. 

Approximating advanced CFR variants presents a significant challenge, as these tabular CFR variants update cumulative counterfactual regrets by bootstrapping from the previous iteration. This makes it impossible to directly approximate them using samples from all iterations stored in the replay buffer, as is typically done when approximating vanilla CFR or LinearCFR. Furthermore, approximating the counterfactual regrets in each iteration is particularly challenging since their computation depends on expectation values weighted by the opponent's reach probabilities. These unnormalized values introduce substantial difficulties for neural networks to approximate effectively. Advantages, however, have been shown to be effectively approximated and generalized using neural networks~\cite{schulman2017proximal}, and can be interpreted as a form of weighted counterfactual regrets~\cite{srinivasan2018actor}. Therefore, we directly approximate cumulative advantages during each iteration by using the samples collected in the current iteration while bootstrapping from the results of the previous iteration.

Moreover, under the model-free setting, the algorithm samples only a single action per state, resulting in high variance in the collected samples during each iteration. 
To mitigate this issue, we introduce an auxiliary value network inspired by DREAM~\cite{steinberger2020dream} to reduce the variance induced by episode sampling. 
Building upon the approximated cumulative advantages, we integrate the novel tabular variants DCFR+ and PDCFR+ into neural CFR. For DCFR+, we apply discounting and clipping to the cumulative advantages at each iteration, enabling efficient action reuse and controlling the potential overgrowth of cumulative advantages over iterations. For PDCFR+, we further introduce an additional network to fit the advantages at each iteration, which is then used to predict the next advantages and compute the new strategy. Experimental results demonstrate that these algorithms achieve competitive performance compared to other model-free neural methods.

\section{Preliminaries}
\subsection{Notations}
Extensive-form games~\cite{osborne1994course} provide a tree-based formalism widely used to describe IIGs. These games involve a finite set $\mathcal{N} = \{1, 2, \ldots, N\}$ of \textbf{players}, along with a special player $c$ called \textbf{chance} following a fixed, known stochastic strategy. A \textbf{history} $h$ represents a sequence of all actions taken by players, including any private information available only to specific players. The set of all possible histories forms $\mathcal{H}$, while $\mathcal{Z} \subseteq \mathcal{H}$ denotes \textbf{terminal histories} where no further actions are possible. The set of terminal histories that can be reached from a history $h$ is denoted by $\mathcal{Z}[h] = \left\{ z \in \mathcal{Z}: h \sqsubseteq z\right\}$, where the relationship $g\sqsubseteq  h$ indicates that $g$ is equal to or a \textbf{prefix} of $h$. At any given history $h$, players choose from the \textbf{actions} available, represented by $\mathcal{A}(h) = \{a: ha \in \mathcal{H}\}$. The player making the decision at history $h$ is denoted by $\mathcal{P}(h)$. Each player $i \in \mathcal{N}$ is associated with a \textbf{utility function} $u_i(z): \mathcal{Z} \rightarrow \mathbb{R}$, which assigns a  utility to each terminal history $z \in \mathcal{Z}$. 

In IIGs, the lack of information is modeled using \textbf{information sets} $\mathcal{I}_i$ for each player $i \in \mathcal{N}$, where $h, h' \in \mathcal{I} $ indicates that player $i$ cannot distinguish between them based on the information available. For instance, in poker, histories in the same information set differ only in opponents' private cards. Hence, $\mathcal{A}(I) = \mathcal{A}(h)$ and $\mathcal{P}(I) = \mathcal{P}(h)$ for any $h \in I$. The set of all terminal histories that can be reached from an information set $I$ is denoted by $\mathcal{Z}[I] = \bigcup_{h \in I} \mathcal{Z}[h]$, and $z[I]$ is the unique history $h\in I$ such that $h\sqsubseteq z$. 

A \textbf{strategy} $\sigma_i(I)$ assigns a probability distribution over actions $\mathcal{A}(I)$ available to player $i$ in information set $I$, where $\sigma_i(I, a)$ represents the probability of player $i$ choosing action $a$ in $I$. Similarly, the strategies must be consistent across all histories in an information set. Thus, for any $h_1, h_2 \in I$, we have $\sigma_i(I) = \sigma_i(h_1) = \sigma_i(h_2)$. A \textbf{strategy profile} $\sigma = \{\sigma_i \mid \sigma_i \in \Sigma_i, i \in \mathcal{N}\}$ specifies the strategies for all players, where $\Sigma_i$ denotes the set of all possible strategies for player $i$, and $\sigma_{-i}$ refers to the strategies of other players.

The \textbf{history reach probability} $\pi^\sigma(h)$ is the joint probability of reaching history $h$ under strategy profile $\sigma$, computed as $\pi^\sigma(h)=\prod_{h^\prime a \sqsubseteq h}\sigma_{\mathcal{P}(h')}(h',a)$. It factorizes as $\pi^\sigma(h)=\pi^\sigma_i(h)\pi^{\sigma}_{-i}(h)$, where $\pi^\sigma_i(h)$ is player $i$'s contribution, and $\pi^{\sigma}_{-i}(h)$ is the contributions of all other players. The \textbf{information set reach probability} $\pi^\sigma(I)$ is defined as $\pi^\sigma(I)=\sum_{h \in I}^{}\pi^\sigma(h)$, and the \textbf{interval information set reach probability} from $h$ to $h'$ is defined as $\pi^\sigma(h,h')=\pi^\sigma(h) / \pi^\sigma(h')$ if $h' \sqsubseteq h$.  $\pi_i^\sigma(I), \pi^\sigma_{-i}(I), \pi^\sigma_{i}(h, h'), \pi^\sigma_{-i}(h, h')$ are defined similarly.

The \textbf{expected utility} $u_i(\sigma_i, \sigma_{-i})$ for player $i$ denotes her utility when playing $\sigma_i$ against opponents' strategy $\sigma_{-i}$. Formally, $u_i(\sigma_i, \sigma_{-i}) = \sum_{z \in \mathcal{Z}}^{}\pi^\sigma(z)u_i(z)$.
At the level of an information set $I$, the expected utility for taking action $a$ is 
\begin{align*}
    u_i^\sigma(I, a) =\frac{\sum_{h \in I}^{}\pi^\sigma(h)\sum_{z \in \mathcal{Z}[ha]}^{}\pi^{\sigma}(ha, z)u_i(z) }{\sum_{h \in I}{\pi^\sigma(h)}}, 
\end{align*}
and for the whole information set: $u_i^\sigma(I) = \sum_{a\in \mathcal{A}(I)}^{}\sigma_i(I,a)u_i^\sigma(I, a)$. The \textbf{advantage} $A_i^\sigma(I, a) = u_i^\sigma(I, a) - u_i^\sigma(I)$ then quantifies the utility gain of playing action $a$ instead of the current strategy $\sigma_i(I, a)$.

\subsection{Best Response and Nash Equilibrium}
The \textbf{best response} to $\sigma_{-i}$ is any strategy $\text{BR}(\sigma_{-i})$ that maximizes the expected utility, satisfying $u_i(\text{BR}(\sigma_{-i}), \sigma_{-i})=\max_{\sigma'_{i}\in\Sigma_i}u_i(\sigma_i', \sigma_{-i})$. An \textbf{NE} is a strategy profile $\sigma^*=\left(\sigma_i^*, \sigma_{-i}^*\right)$ where each player plays a best response to the others, ensuring $\forall i\in\mathcal{N}, u_i(\sigma_i^*, \sigma_{-i}^*) = \max_{\sigma_i'\in \Sigma_i}u_i(\sigma_i', \sigma_{-i}^*)$. The \textbf{exploitability} of a strategy $\sigma_i$ is defined as $e_i(\sigma_i) = u_i(\sigma_i^*, \sigma_{-i}^*) - u_i(\sigma_i, \text{BR}(\sigma_i))$. In an $\boldsymbol{\epsilon}$\textbf{-NE}, no player's exploitability exceeds $\epsilon$. The exploitability of a strategy profile $\sigma$, given by $e(\sigma)=\sum_{i\in\mathcal{N}}e_i(\sigma_i)/|\mathcal{N}|$, represents the approximation error relative to the NE.

\subsection{Counterfactual Regret Minimization}
Counterfactual Regret Minimization (CFR)~\cite{zinkevich_regret_2007} frequently uses \textbf{counterfactual value} $v_i^\sigma(I, a)$, which is the expected utility of an action $a$ at an information set $I\in \mathcal{I}_i$ for player $i$, weighted by the probability that $i$ reaches $I$. Formally, $v_i^\sigma(I, a) = \sum_{z \in Z[I]}^{}\pi_{-i}^\sigma(z[I])\pi^\sigma(z[I]a, z) u_i(z)$, and $v_i^\sigma(I) = \sum_{a \in \mathcal{A(I)}}^{}\sigma_i(I, a)v_i^\sigma(I)$.
Starting from a uniform strategy $\sigma^1$, CFR traverses the game tree in each iteration $t$ to computes the \textbf{instantaneous counterfactual regret} $r^t_i(I, a)=v_i^{\sigma^t}(I, a)-v_i^{\sigma^t}(I)$, which relates to the advantages as $r_i^t(I, a)=A_i^{\sigma^t}(I, a)\pi_{-i}^{\sigma^t}(I)$~\cite{srinivasan2018actor}, accumulates it into \textbf{cumulative counterfactual regret} $R^t_i(I, a)=\sum_{k=1}^{t}r_i^{k}(I, a)$, and updates strategies by regret-matching~\cite{hart2000simple}: $\sigma_i^{t+1}(I, a)=\frac{\max(0, R_i^{t}(I, a))}{\sum_{a^{\prime} \in \mathcal{A}(I)}\max(0, R_{i}^{t}\left(I, a^{\prime}\right))}$, using the uniform strategy when  all regrets are non-positive.
The \textbf{average strategy} $\bar{\sigma}^t$ converges to an NE and is computed via \textbf{cumulative strategy} $C_i^t(I, a)$:
\begin{align*}
	\resizebox{0.93\linewidth}{!}{$
	\!\!\!\!\!C_i^{t}(I, a)\!\!=\!\!\sum_{k=1}^{t}\left(\pi_{i}^{\sigma^{k}}(I) \sigma_{i}^{k}(I, a)\right), \bar{\sigma}_i^t(I, a)\!\!=\!\!\frac{C_i^t(I,a)}{\sum_{a'\in\mathcal{A}(I)}C_i^t(I, a')}.
    $}
\end{align*}

\begin{algorithm*}[t]
	\KwIn{total iterations $T$, traversal times $K$, parameters $\alpha, \gamma$, exploration coefficient $\epsilon$.}
    \SetKwFunction{FMain}{Traverse}
    \SetKwProg{Fn}{Function}{:}{}
    Initialize each player's cumulative advantage network $R(I, a\mid\theta^0_i)$ with parameters $\theta^0_i$\;
    Initialize each player's instantaneous advantage network $r(I, a\mid\phi^0_i)$ with parameters $\phi^0_i$\;
    Initialize history value network $Q(h, a \mid w^0)$ and history value buffer $\mathcal{B}_Q$\;
    Initialize reservoir-sampled strategy buffer $\mathcal{B}_{\Pi}$ and each player's advantage buffer $\mathcal{B}_{V, i}$\;
    \For{CFR iteration $t=1$ to $T$}{
        clear each player's advantage buffer $\mathcal{B}_{V, i}$\;
        \ForEach{player $i$}{
            \For{traversal $k=1$ to $K$}{
                \FMain($\emptyset$, $i$, $\mathcal{B}_{V, i}$, $\mathcal{B}_{\Pi}$, $\mathcal{B}_Q$, $t$), $\triangleright$(Algorithm~\ref{algo2})\;
            }
            Train $\theta_i^t$ on loss $\mathcal{L}(\theta^t_i) = \mathbb{E}_{(I, \bar{r} )\sim \mathcal{B}_{V, i}}\left[\sum_{a \in \mathcal{A}(I)}^{}\left(\max(R(I, a \mid \theta^{t-1}_{i} ), 0)\frac{(t-1)^\alpha}{(t-1)^\alpha + 1} +\bar{r}(I, a)-R(I, a\mid\theta^t)\right)^2\right]$\;
            For VR-DeepPDCFR+, Train $\phi_i^t$ on loss $\mathcal{L}(\phi_i^t) = \mathbb{E}_{(I, \bar{r} )\sim \mathcal{B}_{V, i}}\left[\sum_{a \in \mathcal{A}(I)}^{}\left(\bar{r}(I, a)-r(I, a\mid\phi_i^t)\right)^2\right]$\;
            Train $w^t$ on loss $\mathcal{L}(\omega^t) = \mathbb{E}_{(t, h, a, \hat{u}, h', I', i)\sim \mathcal{B}_{Q}}\left[\left(\hat{u} + \sum_{a'\in \mathcal{A}(h')}^{}\sigma_i^{t+1}(I', a')Q(h', a' \mid \omega') - Q(h, a \mid \omega^{t}) \right)^2\right]$\;
        }
        }
    Train $\psi$ on loss $\mathcal{L}(\psi) = \mathbb{E}_{(I, t, \sigma^{t})\sim \mathcal{B}_{\Pi}}\left[\left(\frac{t}{T}\right)^\gamma \sum_{a\in \mathcal{A}(I)}\left(\sigma^{t}(I, a) - \Pi(I, a \mid \psi)\right)^2\right]$\;
    \KwOut{The average strategy network $\Pi(I, a\mid \psi)$.}
	\caption{Training procedures for VR-DeepDCFR+ and VR-DeepPDCFR+}
	\label{algo1}
\end{algorithm*}

\subsection{Tabular CFR Variants}
Since the birth of CFR, numerous variants have been proposed to accelerate convergence. 
\textbf{CFR+}~\cite{tammelin_solving_2014,bowling_heads-up_2015} improves convergence by: (1) clipping negative cumulative counterfactual regrets: $R^t_i(I, a) = \max(R_i^{t-1}(I, a) + r_i^t(I, a), 0)$; (2) using a linear weighted average strategy: $C^t_i(I, a) = C_i^{t-1}(I, a) + t\pi_i^{\sigma^t}(I)\sigma_i^t(I, a)$; (3) applying alternating updates.
\textbf{DCFR}~\cite{brown_solving_2019} further introduces discounting:
\begin{align*}
	&\resizebox{1\linewidth}{!}{$
    R_i^{t}(I, a) = \left\{\begin{array}{cl}
		R_i^{t\small{-}1}(I, a)\frac{(t\small{-}1)^\alpha}{(t\small{-}1)^{\alpha}\small{+}1}\small{+}r^{t}_i(I, a), & \text{\small{if}} \;R_i^{t\small{-}1}(I, a) \small{>} 0 \\
		R_i^{t\small{-}1}(I, a)\frac{(t\small{-}1)^\beta}{(t\small{-}1)^{\beta}\small{+}1} \small{+} r_i^{t}(I, a), & \text{otherwise,}
	\end{array}\right.
    $}\\
	&C_i^t(I, a) = C_i^{t-1}(I, a) \left(\frac{t-1}{t}\right)^\gamma+\pi_i^{\sigma^t}(I)\sigma^t_i(I, a).
\end{align*}
\textbf{LinearCFR} is a special case of DCFR, with updates $R^t_i(I, a) = R_i^{t-1}(I, a) + tr_i^t(I, a), C^t_i(I, a) = C_i^{t-1}(I, a) + t\pi_i^{\sigma^t}(I)\sigma_i^t(I, a)$.
\textbf{DCFR+}~\cite{AutoCFR,PDCFRPlus} integrates CFR+ and DCFR: $R_i^t(I, a) = \max\left(R_i^{t-1}(I, a) \frac{(t-1)^\alpha}{(t-1)^\alpha + 1} + r_i^t(I,a), 0\right)$. \textbf{PCFR+}~\cite{farina2021faster} follows CFR+ for updating cumulative counterfactual regrets and predicts the next iteration's cumulative counterfactual regrets as $\widetilde{R}_i^{t+1}(I, a) = \max(R_i^t(I, a) + \widetilde{r}_i^{t+1}(I, a), 0)$, where the prediction of instantaneous counterfactual regrets $\tilde{r}_i^{t+1}(I, a)$ is assumed to change slowly and is set to $r_i^t(I, a)$. Then PCFR+ uses $\tilde{R}_i^{t+1}(I, a)$ in regret-matching to compute the next strategy. \textbf{PDCFR+}~\cite{PDCFRPlus} updates cumulative counterfactual regrets like DCFR+ and predicts the next iteration's cumulative counterfactual regrets as $\widetilde{R}_i^{t+1}(I, a) = \max(R_i^t(I, a)\frac{t^\alpha}{t^\alpha + 1} + \widetilde{r}_i^{t+1}(I, a), 0)$. For the cumulative strategy, DCFR+, PCFR+, and PDCFR+ follow the same formula as DCFR but differ in their choices of $\gamma$. 
\subsection{Monte Carlo CFR}
The tabular CFR variants improve convergence but require full game tree traversals and tabular storage, which are infeasible in large games. 
To reduce time complexity, \textbf{Monte Carlo CFR} (MCCFR)~\cite{lanctot2009monte} estimates instantaneous counterfactual regrets by sampling portions of the game tree. 
MCCFR includes \textbf{external sampling} (ES), which explores all actions for one player while sampling actions for others, and \textbf{outcome sampling} (OS), which samples actions for all players along a single episode. This work focuses on the model-free variant OS-MCCFR, which learns directly from sampled episodes without requiring a perfect simulator.
In iteration $t$, episodes are sampled using a \textbf{sampling strategy} $\xi^t$, defined as:
\begin{align*}
\resizebox{1\linewidth}{!}{$
\xi^t_i (I, a)= \epsilon \frac{1}{\left\vert \mathcal{A}(I) \right\vert} + (1 - \epsilon)\sigma_i^t(I, a), \;\xi_{-i}^t(I, a) = \sigma_{-i}^t(I, a).
$}
\end{align*}
The \textbf{sampled counterfactual value} $\hat{v}_i^{\sigma^t}(I, a \mid z)$ uses importance sampling to ensure the unbiasedness property:
\begin{align*}
\resizebox{1\linewidth}{!}{$
\hat{v}_i^{\sigma^t}(I, a \mid z) = \frac{\pi_{-i}^{\sigma^t}(z[I])\pi^{\sigma^t}(z[I]a, z)u_i(z)}{\pi^{\xi^t}(z)} = \frac{\pi^{\sigma^t}(z[I]a, z)u_i(z)}{\pi_i^{\xi^t}(z[I])\pi^{\xi^t}(z[I], z)}.
$}
\end{align*}
The \textbf{sampled instantaneous counterfactual regrets} is  $\hat{r}_i^t(I, a)=\hat{v}_i^{\sigma^t}(I, a \mid z) - \hat{v}_i^{\sigma^t}(I \mid z)$, where $\hat{v}_i^{\sigma^t}(I \mid z) = \sum_{a \in \mathcal{A}(I)}^{}\sigma_i^t(I, a)\hat{v}_i^{\sigma^t}(I, a \mid z)$. They are unbiased estimators of $r_i^t(I, a)$. Similarly, the \textbf{sampled strategy} $\hat{\sigma}_i^{t}(I, a \mid z)$ is defined as $\sigma_i^t(I, a)\pi^{\sigma^t}_{i}(I) / \pi^{\xi^t}(z) $.

\subsection{DeepCFR}
\textbf{DeepCFR}~\cite{brown2019deep} uses neural networks to approximate CFR, avoiding tabular storage of cumulative counterfactual regrets and cumulative strategies. In each iteration, it performs $K$ partial traversals, stores the sampled instantaneous counterfactual regrets in a reservoir buffer, and trains a network from scratch to predict cumulative counterfactual regrets. The expectation value of action $a$ in an information set $I$ after $T$ iterations is $R_i^T(I, a)/\sum_{t=1}^{T}\pi^{\xi^t}(I)$, where the denominator accounts for sampling bias canceled out during regret matching.
A second buffer stores strategies across iterations to approximate the average strategy. For the sampling strategy, DeepCFR adopts ES for better performance but relies on a perfect simulator, while remaining compatible with OS. Since LinearCFR's update can be rewritten as $R_i^t(I, a)=\sum_{t=1}^{T}tr_i^t(I, a)$, DeepCFR can approximate LinearCFR by weighing samples in iteration $t$ by $t$, yielding improved performance. 

\begin{algorithm*}[t]
    \SetKwFunction{FMain}{Traverse}
    \SetKwProg{Fn}{Function}{:}{}
    \Fn{\FMain{$h$, $i$, $\mathcal{B}_{V, i}$, $\mathcal{B}_{\Pi}$, $\mathcal{B}_Q$, $t$}}{
        \KwIn{History $h$, traversing player $i$, advantage buffer $\mathcal{B}_{V, i}$, strategy buffer $\mathcal{B}_{\Pi}$, history value buffer $\mathcal{B}_Q$, iteration $t$.}
        \uIf{$h$ is terminal}{
            \KwRet the utility of player $i$\;
        }
        For VR-DeepDCFR+, compute strategy $\sigma^t_{}(I, a)$ from $R(I, a \mid \theta^{t-1}_{\mathcal{P}(h)})$ using regret matching\;
        For VR-DeepPDCFR+, compute strategy $\sigma^t(I, a)$ from the predicted cumulative advantages          
            $\max\left(R(I, a \mid \theta^{t-1}_{\mathcal{P}(h)}), 0\right) \frac{(t-1)^\alpha}{(t-1)^\alpha + 1} + r(I, a\mid \phi_{\mathcal{P}(h)}^{t-1})$ using regret matching\;
        \For{$a \in \mathcal{A}(h)$}{
            $\xi^t(I, a) \leftarrow \epsilon \frac{1}{|\mathcal{A}(h)|} + (1-\epsilon)\sigma^t(I, a)$ if $\mathcal{P}(h)=i$ else $\sigma^t(I,a)$ \;
        }
        $\hat{a} \sim \xi^t(I)$, $h'$$\leftarrow$ $h\hat{a}$\;
        \While{$\mathcal{P}(h')$ is chance}{
            $a \sim \sigma(h'), h' \leftarrow h'a$\;
        }
        $\bar{v}(I' \mid z) \leftarrow$ \FMain($h', i, \mathcal{B}_{V, i}, \mathcal{B}_{\Pi}, \mathcal{B}_Q, t$)\; 
        \For{$a \in \mathcal{A}(h)$}{
            $\bar{v}(I, a \mid z) \leftarrow$ $Q_i(h, a \mid w^{t-1} ) + \frac{\bar{v}(I' \mid z) - Q_i(h, a \mid w^{t-1})}{\xi^t(I, a)}$ if $a=\hat{a}$ else $Q_{i}(h, a \mid w^{t-1})$
        }
        
        $\bar{v}(I \mid z) \leftarrow$ $\sum_{a\in \mathcal{A}(h)}^{}\sigma^{t}(I, a) \bar{v}(I, a \mid z)$\;
        \uIf{$\mathcal{P}(h)=i$}{
            \For{$a \in \mathcal{A}(h)$}{
                $\bar{r}(I, a) \leftarrow \bar{v}(I, a) - \sum_{a' \in \mathcal{A}(I)}^{}\sigma^t(I, a')\bar{v}(I, a')$\;
            }
            Insert $(I, \bar{r})$ into the advantage buffer $\mathcal{B}_{V, i}$\;
        }
        \Else{
            Insert $(I, t, \sigma^t(I))$ into the strategy buffer $\mathcal{B}_{\Pi}$\;
        }
        Insert $(t, h, \hat{a}, \hat{u}, h', I', i)$ into the history value buffer $\mathcal{B}_Q$\;
        \KwRet $\bar{v}(I \mid z)$\;
        
  }
\caption{CFR Traversal with Outcome Sampling for VR-DeepDCFR+ and VR-DeepPDCFR+}
\label{algo2}
\end{algorithm*}

\section{Related Work}

Learning an NE in IIGs by combining deep reinforcement learning and game theory algorithms has gained considerable attention in recent years.
These approaches can generally be divided into three main categories. (1) Policy-Space Response Oracle (PSRO)~\cite{lanctot2017unified} and its variants maintain a population of strategies and iteratively compute the best response to a meta-strategy. Neural Fictitious Self-Play (NFSP)~\cite{heinrich2016deep} is a special case of PSRO, where the meta-strategy is the uniform distribution over past strategies. While these methods are effective and scalable, they rely on computationally expensive approximate best response calculations, and their convergence speed is often related to the size of the strategy space.
(2) Numerous neural CFR methods have been developed to approximate the behavior of tabular CFR~\cite{brown2019deep,lidouble,gruslys2020advantage,steinberger2020dream,liu2022model,meng2023efficient,ESCHER}. Our approach builds on this line of work by approximating novel tabular CFR variants to further accelerate convergence. (3) Another research direction involves modifying policy gradient algorithms to enable convergence to an NE~\cite{srinivasan2018actor,lockhart2019computing,hennes2020neural,fu2021actor}. However, their performance is sensitive to hyperparameters.

\section{Deep (Predictive) Discounted CFR}

\subsection{Fitting Cumulative Advantages by Bootstrapping}

DeepCFR can naturally integrate with LinearCFR but struggles to approximate the behaviors of more advanced CFR variants like DCFR+ and PDCFR+, which rely on cumulative counterfactual regrets from the previous iteration. A straightforward approach involves approximating instantaneous counterfactual regrets at each iteration and bootstrapping on the estimated cumulative counterfactual regrets from the prior iteration. However, counterfactual values are expected utilities weighted by opponents’ reach probabilities, which diminish significantly over long episodes. These reach probabilities vary widely across information sets, making it challenging for networks to effectively learn values across diverse orders of magnitude~\cite{van2016learning}. Furthermore, as demonstrated in Theorem~\ref{thm:1} (with all proofs provided in Appendix~\ref{app:proofs}), the expected estimation are scaled by sampling reach probabilities. This results in the expectation of the estimated cumulative counterfactual regrets taking the form $\sum_{t=1}^{T}\left[r_i^t(I, a)/\pi^{\xi^t}(I)\right]$. Since denominators change across iterations, it leads to deviation from CFR's behavior. 

\begin{theorem}
By using outcome sampling to collect data $(I, \hat{r}_i^t(I))$ into a buffer $\mathcal{B}_i$ for player $i$ in iteration $t$, and training a neural network $r(I, a \mid \phi^t_i)$ on loss $\mathcal{L}(\phi_i^t)=\mathbb{E}_{(I, \hat{r}_i^t(I))\sim \mathcal{B}_i}\left[\sum_{a\in \mathcal{A}(I)}^{}\left(\hat{r}_i^t(I, a)- r(I, a \mid \phi_i^t)\right)^2 \right] $, the expected target value of $r(I, a \mid \phi_i^t)$ for any sampled information set $I$ is given by:
\begin{align*}
    \mathbb{E}_{z \sim \xi^t}\left[ \hat{r}_i^{t}(I, a) \vert z \in Z_I\right] =  \frac{r_i^t(I, a)}{\pi^{\xi^t}(I)}.
\end{align*}
\label{thm:1}
\end{theorem}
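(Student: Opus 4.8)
The plan is to invoke the standard least-squares fact that the minimizer of $c\mapsto\mathbb{E}[(Y-c)^2]$ is $\mathbb{E}[Y]$: the value that $r(I,a\mid\phi_i^t)$ converges to is the conditional expectation of the stored regression target $\hat r_i^t(I,a)$ given that the sampled episode visits $I$. I would then evaluate this conditional expectation by combining the identity $\Pr_{z\sim\xi^t}[z\in Z_I]=\pi^{\xi^t}(I)$ with the already-stated unbiasedness of the outcome-sampling estimator.

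First I would make the phrase ``expected target value'' precise. Assuming perfect recall, a single sampled episode $z$ visits at most one history of $I$, so the pair $(I,\hat r_i^t(I))$ is inserted into $\mathcal{B}_i$ exactly on the event $z\in Z_I$; conditioned on this event, $z$ is drawn over $Z_I=\bigcup_{h\in I}\mathcal{Z}[h]$ with probability proportional to $\pi^{\xi^t}(z)$. Since $\mathcal{L}(\phi_i^t)$ is a sum of per-action squared errors, and assuming the network has enough capacity to represent an arbitrary function of the finitely many information sets occurring in $\mathcal{B}_i$ and is trained to optimality on the population loss, its optimum at $I$ is the per-component conditional mean, which writes as the ratio of $\mathbb{E}_{z\sim\xi^t}[\hat r_i^t(I,a)\,\mathbf{1}[z\in Z_I]]$ to $\Pr_{z\sim\xi^t}[z\in Z_I]$, i.e.
\begin{align*}
    r(I,a\mid\phi_i^t)=\mathbb{E}_{z\sim\xi^t}\!\left[\hat r_i^t(I,a)\mid z\in Z_I\right]=\frac{\sum_{z\in Z_I}\pi^{\xi^t}(z)\,\hat r_i^t(I,a\mid z)}{\sum_{z\in Z_I}\pi^{\xi^t}(z)},
\end{align*}
where $\hat r_i^t(I,a)$ is taken to be $0$ when $z\notin Z_I$ (the regret at $I$ is not touched when $I$ is not visited).

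Then I would simplify the two sums. The denominator is the mass of the conditioning event, $\Pr_{z\sim\xi^t}[z\in Z_I]=\sum_{h\in I}\pi^{\xi^t}(h)=\pi^{\xi^t}(I)$, the events $\{z\text{ reaches }h\}$ for $h\in I$ being mutually exclusive under perfect recall. For the numerator I would substitute $\hat r_i^t(I,a\mid z)=\hat v_i^{\sigma^t}(I,a\mid z)-\hat v_i^{\sigma^t}(I\mid z)$ together with $\hat v_i^{\sigma^t}(I,a\mid z)=\pi_{-i}^{\sigma^t}(z[I])\pi^{\sigma^t}(z[I]a,z)u_i(z)/\pi^{\xi^t}(z)$: the explicit $1/\pi^{\xi^t}(z)$ cancels against the weight $\pi^{\xi^t}(z)$, the terminals with $z\in Z_I$ but $z[I]a\not\sqsubseteq z$ contribute nothing, and the surviving terms reassemble into $v_i^{\sigma^t}(I,a)$ and $v_i^{\sigma^t}(I)$, so the numerator equals $v_i^{\sigma^t}(I,a)-v_i^{\sigma^t}(I)=r_i^t(I,a)$ --- which is precisely the unbiasedness $\mathbb{E}_{z\sim\xi^t}[\hat r_i^t(I,a)]=r_i^t(I,a)$ already recorded in the preliminaries. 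Dividing numerator by denominator gives $r(I,a\mid\phi_i^t)=r_i^t(I,a)/\pi^{\xi^t}(I)$, as claimed.

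I expect the genuinely delicate step to be the first one: formalizing in what sense the trained network attains ``the expected target'' (the idealized full-capacity population-optimal regression fit), and pinning down the correct conditioning event $z\in Z_I$ together with the induced episode law $\propto\pi^{\xi^t}(z)$ and the MCCFR bookkeeping that an action not sampled at $z[I]$ contributes $0$ to $\hat v_i^{\sigma^t}(I,\cdot\mid z)$. Everything afterward is definition-chasing plus the elementary identity $\sum_{z\in Z_I}\pi^{\xi^t}(z)=\pi^{\xi^t}(I)$.
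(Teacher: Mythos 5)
Your proposal is correct and follows essentially the same route as the paper's proof: write the conditional expectation given $z \in Z_I$ as the $\pi^{\xi^t}$-weighted sum over $Z_I$ divided by $\Pr_{z\sim\xi^t}[z\in Z_I]=\pi^{\xi^t}(I)$, cancel the importance weight $1/\pi^{\xi^t}(z)$, and reassemble the sums into $v_i^{\sigma^t}(I,a)$ and $v_i^{\sigma^t}(I)$ (the paper treats $\hat v_i^{\sigma^t}(I,a)$ and $\hat v_i^{\sigma^t}(I)$ separately and subtracts, whereas you combine them in one numerator, a cosmetic difference). Your explicit least-squares justification of ``expected target value'' and the bookkeeping about unsampled actions and disjointness of the events $\{z\in\mathcal{Z}[h]\}$, $h\in I$, are points the paper leaves implicit, but they do not change the argument.
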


To address these challenges, we adjust the calculation of the sampled counterfactual value as
    $\check{v}_i^{\sigma^t}(I, a \mid z) = \frac{\pi^{\sigma^t}(z[I]a, z)u_i(z)}{\pi^{\xi^t}(z[I], z)}$,
and $\check{v}_i^{\sigma^t}(I\mid z)$ and $\check{r}_i^t(I, a)$ are defined similarly.
As demonstrated in Theorem~\ref{thm:2}, these expectations correspond to the advantages of the information set $I$.
Advancements in deep reinforcement learning have shown that neural networks excel at predicting and generalizing advantages, even in complex environments with large state spaces~\cite{schulman2017proximal}.
By computing the cumulative advantages $\check{R}^t(I, a)=\sum_{k=1}^{t}A^{\sigma^t}(I, a)$ instead of cumulative counterfactual regrets, the process can be interpreted as a type of weighted CFR since $r_i^t(I, a)=\pi_{-i}^{\sigma^t}(I)A_i^{\sigma^t}(I, a) $~\cite{srinivasan2018actor}.
\begin{theorem}
By using outcome sampling to collect data $(I, \check{r}_i^t(I))$ into a buffer $\mathcal{B}_i$ for player $i$ in iteration $t$, and training a neural network $r(I, a \mid \phi_i^t)$ on loss $\mathcal{L}(\phi_i^t)=\mathbb{E}_{(I, \check{r}_i^t(I))\sim \mathcal{B}_i}\left[\sum_{a\in \mathcal{A}(I)}^{}\left(\check{r}_i^t(I, a) - r(I, a \mid \phi_i^t)\right)^2\right] $, the expected target value of $r(I, a \mid \phi_i^t)$ for any sampled information set $I$ is given by:
\begin{align*}
    \mathbb{E}_{z \sim \xi^t}\left[ \check{r}_i^{t}(I, a) \vert z \in Z_I\right] =  \frac{r_i^t(I, a)}{\pi_{-i}^{\xi^t}(I) } =A_i^{\sigma^t}(I, a).
\end{align*}
\label{thm:2}
\end{theorem}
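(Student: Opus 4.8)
The plan is to mirror the argument behind Theorem~\ref{thm:1}, since Theorem~\ref{thm:2} differs only in that the modified sampled value $\check v_i^{\sigma^t}(I,a\mid z)$ omits the factor $\pi_i^{\xi^t}(z[I])$ that appears in the denominator of $\hat v_i^{\sigma^t}(I,a\mid z)$. Because the network is trained on a squared loss with targets $\check r_i^t(I,a)$, the Bayes-optimal regressor at a sampled information set $I$ is the conditional expectation $\mathbb{E}_{z\sim\xi^t}[\check r_i^t(I,a)\mid z\in Z_I]$, so it suffices to compute this expectation and identify it with $A_i^{\sigma^t}(I,a)$. I would state this reduction to conditional expectation first, citing the standard fact that $\arg\min_f \mathbb{E}[(Y-f(X))^2]$ is $f(X)=\mathbb{E}[Y\mid X]$.

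Next I would expand the conditional expectation. Conditioning on $z\in Z_I$ means the sampled episode passes through the (unique) history $z[I]\in I$; under outcome sampling the probability of reaching a terminal $z$ with $z[I]a\sqsubseteq z$, conditioned on having reached $I$, is $\pi^{\xi^t}(z[I],z)/\pi^{\xi^t}(I\mid z\in Z_I)$-type normalization — more precisely, I would write the conditional law of the episode given $z\in Z_I$ and split $\pi^{\xi^t}(z) = \pi^{\xi^t}(z[I])\,\pi^{\xi^t}(z[I],z)$ with $\pi^{\xi^t}(z[I]) = \pi_i^{\xi^t}(z[I])\,\pi_{-i}^{\xi^t}(z[I])$. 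Plugging the definition $\check v_i^{\sigma^t}(I,a\mid z) = \pi^{\sigma^t}(z[I]a,z)u_i(z)/\pi^{\xi^t}(z[I],z)$ into the expectation and summing over all terminals $z$ consistent with playing $a$ at $I$, the $\pi^{\xi^t}(z[I],z)$ factors cancel, leaving $\sum_{z\in Z[Ia]} \pi^{\sigma^t}(z[I]a,z)u_i(z)$ times the normalization from conditioning on $z\in Z_I$, which contributes a factor $1/\pi_{-i}^{\xi^t}(I)$ (the $\pi_i^{\xi^t}$ and $\epsilon$-exploration parts cancel between the sampling of the path to $I$ and the conditioning). This yields $\mathbb{E}_{z\sim\xi^t}[\check v_i^{\sigma^t}(I,a\mid z)\mid z\in Z_I] = v_i^{\sigma^t}(I,a)/\pi_{-i}^{\xi^t}(I)$, where I use $v_i^{\sigma^t}(I,a) = \sum_{z\in Z[I]}\pi_{-i}^{\sigma^t}(z[I])\pi^{\sigma^t}(z[I]a,z)u_i(z)$ together with $\pi_{-i}^{\sigma^t}(z[I]) = \pi_{-i}^{\sigma^t}(I)$ for the contributions that matter (the opponent reach into $I$ being constant across $h\in I$ in the relevant sense, or absorbed into the sum over $h\in I$).

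Then, by linearity, $\mathbb{E}_{z\sim\xi^t}[\check r_i^t(I,a)\mid z\in Z_I] = \mathbb{E}[\check v_i^{\sigma^t}(I,a\mid z) - \check v_i^{\sigma^t}(I\mid z)\mid z\in Z_I] = (v_i^{\sigma^t}(I,a) - v_i^{\sigma^t}(I))/\pi_{-i}^{\xi^t}(I) = r_i^t(I,a)/\pi_{-i}^{\xi^t}(I)$. Finally I invoke the identity $r_i^t(I,a) = \pi_{-i}^{\sigma^t}(I)A_i^{\sigma^t}(I,a)$ stated in the preliminaries, and note that since $\xi_{-i}^t = \sigma_{-i}^t$ (the opponents sample from their actual strategy, with no exploration term), we have $\pi_{-i}^{\xi^t}(I) = \pi_{-i}^{\sigma^t}(I)$, so the two reach probabilities cancel and the expression collapses to exactly $A_i^{\sigma^t}(I,a)$.

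The main obstacle I anticipate is bookkeeping the conditioning carefully: getting the normalization factor from "conditioned on $z\in Z_I$" right, and correctly tracking which reach-probability factors are along player $i$'s own path (and thus cancel against the sampling distribution, including the $\epsilon$-uniform exploration mixture) versus the opponents' path. The key structural point that makes it work — and that I would highlight — is precisely that the modified value $\check v$ drops the $\pi_i^{\xi^t}(z[I])$ term, so that the self-importance-sampling correction is exactly what survives the conditioning, while the opponent factor $\pi_{-i}^{\xi^t}(I) = \pi_{-i}^{\sigma^t}(I)$ is what ultimately cancels the reach-weighting in $r_i^t$ to produce the (un-reach-weighted) advantage.
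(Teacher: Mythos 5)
Your proposal is correct and follows essentially the same route as the paper's proof: condition on $z \in Z_I$, write the conditional expectation as a ratio, factor $\pi^{\xi^t}(z) = \pi^{\xi^t}(z[I])\pi^{\xi^t}(z[I],z)$ so the trajectory terms cancel, use $\xi^t_{-i}=\sigma^t_{-i}$ and the constancy of $\pi_i^{\xi^t}(h)$ across $h\in I$ to cancel player $i$'s sampling reach (including exploration), then pass to $\check v_i^{\sigma^t}(I\mid z)$ by linearity and invoke $r_i^t(I,a)=\pi_{-i}^{\sigma^t}(I)A_i^{\sigma^t}(I,a)$. The only nitpick is your parenthetical claim that $\pi_{-i}^{\sigma^t}(z[I])=\pi_{-i}^{\sigma^t}(I)$, which is not needed (and not true in general); the correct bookkeeping is the alternative you mention, namely that the opponent reaches stay inside the sum over $h\in I$ in the numerator while the denominator contributes $\pi_{-i}^{\sigma^t}(I)=\sum_{h\in I}\pi_{-i}^{\sigma^t}(h)$, exactly as in the paper.
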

To address the high variance introduced by the importance sampling term 
$\pi^{\sigma^t}_{i}(I) / \pi^{\xi^t}(z) $ in the sampled strategy $\hat{\sigma}_i^t(I, a \mid z)$, which complicates network training, we directly use the strategy $\sigma_i^t(I, a)$ as the sampled strategy $\check{\sigma}_i^t(I, a \mid z)$. 
However, when it is player $i$'s turn to collect data, we save the sampled strategy $\check{\sigma}_{-i}^t(I, a \mid z)$ to the buffer for the opponent player $-i$. The expectation of the cumulative strategy for player $-i$ is given by $\sum_{t=1}^{T}\mathbb{E}_{z \in \xi^t}\left[\check{\sigma}_{-i}^t(I, a\mid z)  \right] = \sum_{t=1}^{T}\pi_i^{\xi^t}(I)\pi_{-i}^{\sigma^t}(I)\sigma_{-i}^t(I, a)$. This can be interpreted as a form of weighted cumulative strategy, where $\pi_{i}^{\xi^t}(I)$ acts as the weight in iteration $t$.

We now describe the training procedures for the cumulative advantage and average strategy network.
Similar to OS-DeepCFR, outcome sampling is used to sample $K$ episodes per iteration, and these experiences are added into replay buffers.
For player $i$, the replay buffer $\mathcal{B}_{V, i}$ stores information sets $I$ and advantage estimates $\check{r}(I, a)$, which are used to train a cumulative advantage network $R(I, a \mid \theta_i^t)$ to approximate cumulative advantages at a given information set. 
Unlike OS-DeepCFR, where the replay buffer retains samples from all iterations, the replay buffer is cleared at the start of each iteration. 
The network $R(I, a \mid \theta_i^t)$ is trained by bootstrapping according to the loss 
\begin{align*}
    \resizebox{1\linewidth}{!}{$
\mathcal{L}(\theta^t_i) = \mathbb{E}_{(I, \check{r} )\sim \mathcal{B}_{V, i}}\left[\sum_{a \in \mathcal{A}(I)}^{}\left(R(I, a \mid \theta^{t-1}_{i} )+\check{r}(I, a)-R(I, a\mid\theta_i^t)\right)^2\right].
$}
\end{align*}
Another buffer $\mathcal{B}_\Pi$ with reservoir sampling stores information sets $I$, iteration numbers $t$, and sampling strategies $\check{\sigma}^t$. It is used to train an average network $\Pi(I, a \mid \psi)$ that approximates the average strategy over all iterations. 
The network $\Pi(I, a\mid \psi)$ is optimized using the following loss:
\begin{align*}
    \mathcal{L}(\psi) = \mathbb{E}_{(I, t, \check{\sigma}^{t})\sim \mathcal{B}_{\Pi}}\left[\sum_{a\in \mathcal{A}(I)}\left(\check{\sigma}^{t}(I, a) - \Pi(I, a \mid \psi)\right)^2\right].
\end{align*}

\subsection{Approximating Advanced CFR Variants}
The estimated cumulative advantages pave the way for approximating the behaviors of DCFR+ and PDCFR+. Both methods update cumulative counterfactual regrets by applying discounting and clipping operations. This results in the loss for the network $R(I, a \mid \theta^t_i)$ :
\begin{align*}
    \resizebox{1\linewidth}{!}{$
    \mathcal{L}(\theta^t_i) = \mathbb{E}_{(I, \check{r} )\sim \mathcal{B}_{V, i}}\left[\sum_{a \in \mathcal{A}(I)}^{}\left(\max(R(I, a \mid \theta^{t-1}_{i} ), 0)
    \frac{(t-1)^\alpha}{(t-1)^\alpha + 1} +\check{r}(I, a)-R(I, a\mid\theta_i^t)\right)^2\right],
    $}
\end{align*}
where the sequence of discounting and clipping is adjusted to facilitate sampling-based approximation of the expectation $\check{r}(I, a)$. 
Since PDCFR+ relies on predicting the instantaneous counterfactual regrets for the next iteration to compute a new strategy, an instantaneous advantage network $r(I, a \mid \phi_i^t)$ is trained for each player $i$. This network estimates the instantaneous advantages in iteration $t$ using  samples from the replay buffer $\mathcal{B}_{V, i}$, with the loss
    $\mathcal{L}(\phi_i^t) = \mathbb{E}_{(I, \check{r} )\sim \mathcal{B}_{V, i}}\left[\sum_{a \in \mathcal{A}(I)}^{}\left(\check{r}(I, a)-r(I, a\mid\phi_i^t)\right)^2\right].$
The instantaneous advantage network is then used to predict the cumulative advantages for the next iteration as $
\max\left(R(I, a \mid \theta^t_i), 0\right) \frac{t^\alpha}{t^\alpha + 1} + r(I, a\mid \phi_i^t)$
For the average strategy, the cumulative strategy can be expressed as $C_i^t(I, a) = C_i^{t-1}(I, a)+t^\gamma\pi_i^{\sigma^t}(I)\sigma^t_i(I, a)$.
So we can train the average strategy network with the loss:
\begin{align*}
    \resizebox{1\linewidth}{!}{$
    \mathcal{L}(\psi) = \mathbb{E}_{(I, t, \sigma^{t})\sim \mathcal{B}_{\Pi}}\left[\left(\frac{t}{T}\right)^\gamma \sum_{a\in \mathcal{A}(I)}\left(\sigma^{t}(I, a) - \Pi(I, a \mid \psi)\right)^2\right],
    $}
\end{align*}
where $T$ is the total number of iterations.

\subsection{Variance Reduction Based on Baseline Functions}

To mitigate the high variance caused by episode sampling, prior works such as DREAM~\cite{steinberger2020dream} and ESCHER~\cite{ESCHER} incorporate value functions at history nodes. 
DREAM uses value functions as baseline functions for each action, and constructing an unbiased estimator of counterfactual regrets, while ESCHER directly computes counterfactual regrets using value functions.
Since ESCHER requires accurate value estimation and thus incurs significant training time, we adopt the variance reduction approach of DREAM.

For each episode $z$ in iteration $t$, we extract and store a set of experience tuples $(t, h, \hat{a}, \hat{u}, h', I', i)$ in the history value buffer $\mathcal{B}_Q$. Each tuple represents player $i$ taking action $\hat{a}$ at history node $h$, transitioning to node $h'$ associated with information set $I'$, and player 1 receiving utility $\hat{u}$ (where $\hat{u} = u_1(h')$ if $h'$ is a terminal history, and $\hat{u}=0$ otherwise).
The network $Q(h, a \mid w^t)$ estimates the value of each action for player 1 at every history node under the strategy $\sigma^{t+1}$. 
It is trained with the loss
\begin{align*}
    \resizebox{1\linewidth}{!}{$
\mathcal{L}(\omega^t) = \mathbb{E}_{(t, h, \hat{a}, \hat{u}, h', I', i)\sim \mathcal{B}_{Q}}\left[\left(\hat{u} + \sum_{a'\in \mathcal{A}(h')}^{}\sigma_i^{t+1}(I', a')Q(h', a' \mid \omega') - Q(h, \hat{a} \mid \omega^{t}) \right)^2\right].
    $}
\end{align*}
The network is trained in an off-policy manner, eliminating the need to sample new episodes under $\sigma^{t+1}$, thereby improving learning efficiency.
Since we assume the game is two-player zero-sum, we have $Q_1(h, a \mid w^t) = Q(h, a \mid w)$ and $Q_2(h, a \mid w^t) = -Q(h, a \mid w)$.
It is used to compute baseline-adjusted sampled value 
\begin{align*}
    \resizebox{1\linewidth}{!}{$
    \bar{v}_i^{\sigma^t}(I, a \mid z)  =  \left\{
        \begin{array}{ll}
         Q_i(h, a \mid w^{t-1} ) + \frac{\bar{v}_i^{\sigma^t}(I' \mid z) - Q_i(h, a \mid w^{t-1})}{\xi^t(I, a)} & \text{if}\;a=\hat{a}\\
         Q_i(h, a \mid w^{t-1} ) & \text{otherwise,}
        \end{array}
        \right.
    $}
\end{align*}

\begin{align*}
    \resizebox{1\linewidth}{!}{$
    \bar{v}_i^{\sigma^t}(I \mid z) = \left\{
        \begin{array}{ll}
          u_i(z) & \text{if}\;h=z\\
            \sum_{a\in \mathcal{A}(h)}^{}\sigma_i^{t}(I, a) \bar{v}_i^{\sigma^t}(I, a \mid z) & \text{otherwise.}
        \end{array}
        \right.
    $}
\end{align*}
We then replace the sampled advantages $\check{r}_i^t(I, a \mid z)$ with baseline-adjusted sampled advantages $\bar{r}_i^t(I, a \mid z) = \bar{v}_i^{\sigma^t}(I, a \mid z) - \bar{v}_i^{\sigma^t}(I \mid z)$, which serve as unbiased estimators~\cite{schmid2019variance}.
Algorithm~\ref{algo1} outlines the complete training procedure of our algorithms.

\begin{figure*}[t]
    \centering
    \includegraphics[width=1\linewidth]{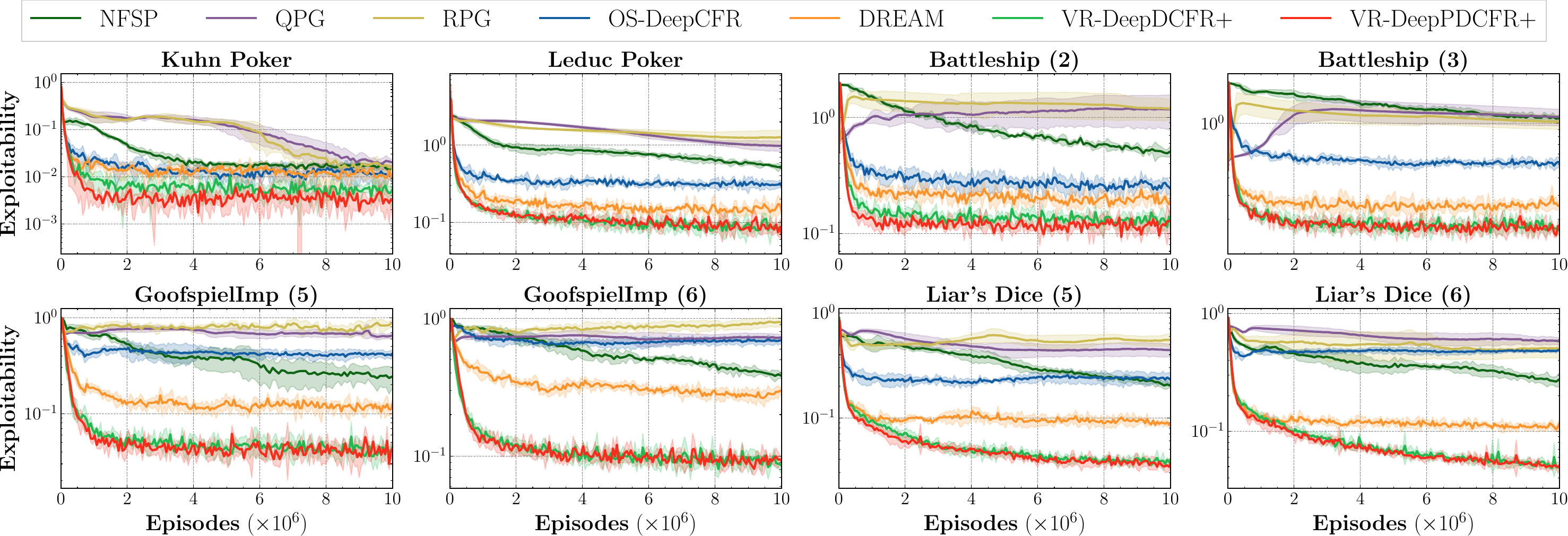}
    \caption{
        Convergence results of seven model-free neural algorithms on eight testing games. 
    }
    \vspace{-0.3cm}
    \label{fig:compare}
\end{figure*}

\section{Experiments}

In this section, we evaluate the performance of VR-DeepDCFR+ and VR-DeepPDCFR+ through extensive experiments. 
We first demonstrate their empirical convergence toward the NE across eight widely used IIGs in the research community. 
We provide detailed descriptions of the games in Appendix~\ref{app:games}.
Exploitability is used as the performance metric to showcase the convergence speeds. We then conduct experiments on a large poker game. 
Given the large size of the game, we assess performance by playing against five agents with different styles, using average rewards as the performance metric.
All testing games are sourced from OpenSpiel~\cite{lanctot2019openspiel}.

We compare our methods against five model-free neural methods: NFSP, q-based policy gradient (QPG) / regret policy gradient (RPG)~\cite{srinivasan2018actor}, OS-DeepCFR and DREAM. All methods use similar network architectures with three hidden layers with 64 neurons each. 
We normalize the utilities received by all algorithms in each game to the range $[-1, 1]$.
The implementation of NFSP, QPG, and RPG are sourced from OpenSpiel, while OS-DeepCFR is adapted from ES-DeepCFR in OpenSpiel.
Hyperparameters for NFSP and OS-DeepCFR follow the OpenSpiel reproduction report~\cite{walton2021multi}, while those for QPG/RPG are from ~\cite{farina2021model}. For DREAM, we adopt the     base hyperparameters of OS-DeepCFR and use a circular buffer of size 1,000,000 for the history value network. VR-DeepDCFR+ and VR-DeepPDCFR+ share same common hyperparameters with DREAM, with specific settings of $\alpha\small{=}2, \gamma\small{=}2$ for DeepDCFR+ and $\alpha\small{=}2.3, \gamma\small{=}2$ for VR-DeepPDCFR+. All algorithms use the same hyperparameters across all games. Detailed configurations are provided in Appendix~\ref{app:hypers}.

\subsection{Convergence to Equilibrium}

We run each algorithm four times with different random seeds, and the results are shown in Figure~\ref{fig:compare}. 
In all plots, the x-axis is the number of episodes sampled by each algorithm, and the y-axis is exploitability shown on a log scale. The shaded area represents $95\%$ confidence intervals over four random seeds.
The two policy gradient algorithms QPG and RPG converge to an exploitability of 0.01 in \textit{Kuhn Poker}, but perform poorly in more complex games, consistent with findings in the work~\citep{farina2021model}.
Neural CFR variants generally converge faster than NFSP, mainly due to CFR's theoretically superior convergence rate. 
Compared to OS-DeepCFR, the algorithms proposed in this work converge faster in most games, demonstrating the effectiveness of approximating cumulative advantages.
Cumulative advantages exhibit less variance than cumulative counterfactual regrets, leading to more stable neural network training and faster convergence of the average strategy.
Moreover, VR-DeepDCFR+ and VR-DeepPDCFR+ inherit the convergence advantages of DCFR+ and PDCFR+ over vanilla CFR and LinearCFR.
Experimental results show that VR-DeepDCFR+ and VR-DeepPDCFR+ outperform OS-DeepCFR and DREAM in convergence speed across most games, highlighting the benefit of integrating neural networks with advanced CFR variants.
The running time of VR-DeepDCFR+ is roughly the same as that of DREAM, since the main difference lies in their loss formulations, which incur negligible overhead. 
Moreover, by using bootstrapping instead of retraining from scratch, it actually requires fewer total training steps.

\subsection{Head-to-Head Evaluation}

We evaluate the algorithms on the large poker game flop hold'em poker (\textbf{\textit{FHP}}) by playing 20,000 matches against five rule-based agents with different styles.
Each agent estimates hand strength at decision points and follows predefined rules reflecting different degrees of aggressiveness, tightness, or bluffing.
These agents simulate diverse exploit scenarios, allowing a multidimensional assessment of strategy robustness~\citep{li2018opponent}. Please refer to Appendix~\ref{app:rule_agents} for details.
We use the average reward over these matches as the performance metric.

Given the poor performance of NFSP, QPG, and RPG in typical IIGs, we focus on comparing four neural CFR variants.
We increase the number of sampled episodes to $10^8$, buffer size to $10^7$, neurons per layer to 128, while keeping other hyperparameters unchanged.
The results are shown in Figure~\ref{fig:deepcfr_head_to_head}. 
Final average rewards are $-7.8\pm 1.4$ chips for OS-DeepCFR, $-2.0\pm 3.1$ for DREAM, $11.6\pm 1.2$ for VR-DeepDCFR+, and $11.3\pm 0.9$ for VR-DeepPDCFR+.
Among the four methods, VR-DeepDCFR+ and VR-DeepPDCFR+ consistently outperform various rule-based agents with different styles.
Notably, in professional Texas Hold’em matches, an average reward of five chips per hand is considered a significant skill gap~\citep{moravvcik2017deepstack}.
Therefore, compared to other neural CFR variants, the proposed methods demonstrate higher learning efficiency and superior performance in the large poker game.

\begin{figure}[h]
    \centering
    \includegraphics[width=1\linewidth]{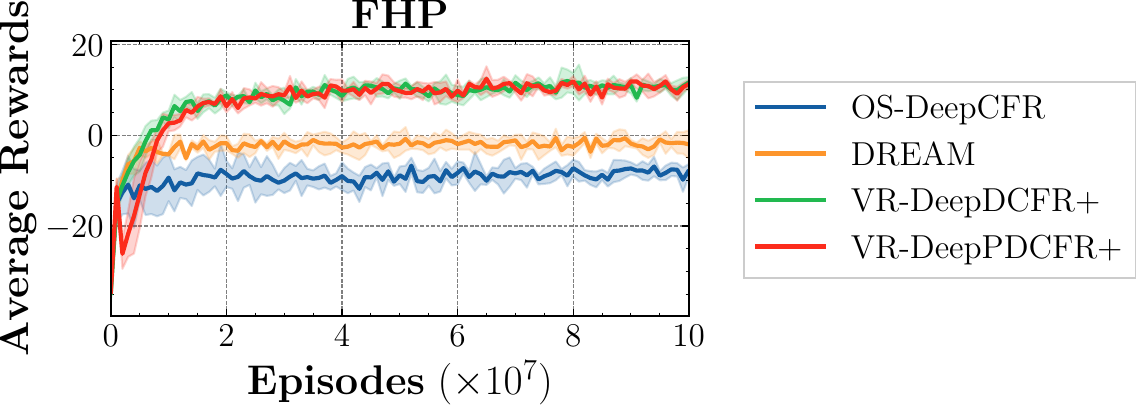}
    \vspace{-0.5cm}
    \caption{
        Head-to-head evaluation results of four neural CFR variants on \textit{FHP}.
    }
    \vspace{-0.4cm}
    \label{fig:deepcfr_head_to_head}
\end{figure}

\subsection{Ablation Studies}
The proposed algorithms consist of three key components: bootstrapped cumulative advantages estimation, approximating advanced CFR variants, and baseline-based variance reduction.
To evaluate the impact of each component, we use VR-DeepPDCFR+ as the base method and test performance on four IIGs after removing each module individually.
Experimental results show that all three components contribute to improved learning efficiency and overall performance.
Please refer to Appendix~\ref{app:ablation} for details.

\section{Conclusions and Future Research}

This work proposes two novel model-free neural CFR variants, VR-DeepDCFR+ and VR-DeepPDCFR+, for learning an NE in two-player zero-sum IIGs. 
In each iteration, the algorithms collect variance-reduced sampled advantages using a history value network, bootstrap cumulative advantages, and apply discounting and clipping to simulate the behaviors of advanced tabular CFR variants DCFR+ and PDCFR+. 
Experimental results demonstrate that our methods achieve faster convergence across eight widely used IIGs and obtain higher average rewards against various rule-based agents in a large poker game compared to other model-free neural algorithms.
Several promising directions remain for future work. One potential avenue is to improve the prediction of instantaneous advantages in VR-DeepPDCFR+, possibly by leveraging recurrent neural networks to capture temporal dependencies more effectively~\cite{sychrovsky2024learning}.

\section{Acknowledgments}
This work is supported in part by the National Key R\&D Program of China (No. 2025ZD0122000), the Natural Science Foundation of China (Nos. 62222606 and 61902402), the Key Research and Development Program of Jiangsu Province (No. BE2023016), and the CCF-Baidu Open Fund.

\bibliography{aaai2026}

\newpage
\onecolumn
\appendix

\makeatletter
\def\section{\@startsection {section}{1}{\z@}{-2.0ex plus
-0.5ex minus -.2ex}{3pt plus 2pt minus 1pt}{\Large\bf\raggedright}}
\makeatother

\setcounter{secnumdepth}{2}
\section{Proof of Theorems}
\label{app:proofs}
\subsection{Proof of Theorem 1}
\begin{theorem}    
    By using outcome sampling to collect data $(I, \hat{r}_i^t(I))$ into a buffer $\mathcal{B}_i$ for player $i$ in iteration $t$, and training a neural network $r(I, a \mid \phi^t_i)$ on loss $\mathcal{L}(\phi_i^t)=\mathbb{E}_{(I, \hat{r}_i^t(I))\sim \mathcal{B}_i}\left[\sum_{a\in \mathcal{A}(I)}^{}\left(\hat{r}_i^t(I, a)^2 - r(I, a \mid \phi_i^t)\right)\right] $, the expected target value of $r(I, a \mid \phi_i^t)$ for any sampled information set $I$ is given by:
\begin{align*}
    \mathbb{E}_{z \sim \xi^t}\left[ \hat{r}_i^{t}(I, a) \vert z \in Z_I\right] =  \frac{r_i^t(I, a)}{\pi^{\xi^t}(I)}
\end{align*}
\end{theorem}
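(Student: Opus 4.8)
The plan is to compute the conditional expectation $\mathbb{E}_{z \sim \xi^t}[\hat{r}_i^t(I, a) \mid z \in Z_I]$ directly from the definitions, decomposing the random episode into the part leading into the information set $I$ and the part leading out of it. Recall from the preliminaries that the sampled instantaneous counterfactual regret is $\hat{r}_i^t(I, a) = \hat{v}_i^{\sigma^t}(I, a \mid z) - \hat{v}_i^{\sigma^t}(I \mid z)$, where $\hat{v}_i^{\sigma^t}(I \mid z) = \sum_{a' \in \mathcal{A}(I)} \sigma_i^t(I, a') \hat{v}_i^{\sigma^t}(I, a' \mid z)$ and $\hat{v}_i^{\sigma^t}(I, a \mid z) = \pi_{-i}^{\sigma^t}(z[I]) \pi^{\sigma^t}(z[I]a, z) u_i(z) / \pi^{\xi^t}(z)$. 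By linearity of expectation it suffices to show $\mathbb{E}_{z \sim \xi^t}[\hat{v}_i^{\sigma^t}(I, a \mid z) \mid z \in Z_I] = v_i^{\sigma^t}(I, a) / \pi^{\xi^t}(I)$; the claim then follows because $r_i^t(I, a) = v_i^{\sigma^t}(I, a) - v_i^{\sigma^t}(I)$.

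First I would write out the conditional expectation explicitly as a sum over terminal histories: $\mathbb{E}_{z \sim \xi^t}[\hat{v}_i^{\sigma^t}(I, a \mid z) \mid z \in Z_I] = \sum_{z \in Z_I} \frac{\pi^{\xi^t}(z)}{\Pr[z \in Z_I]} \hat{v}_i^{\sigma^t}(I, a \mid z)$, where $\Pr[z \in Z_I] = \sum_{h \in I} \pi^{\xi^t}(h) = \pi^{\xi^t}(I)$ is the probability that the sampled episode passes through $I$. Substituting the definition of $\hat{v}_i^{\sigma^t}(I, a \mid z)$, the $\pi^{\xi^t}(z)$ factors cancel, leaving $\frac{1}{\pi^{\xi^t}(I)} \sum_{z \in Z_I} \pi_{-i}^{\sigma^t}(z[I]) \pi^{\sigma^t}(z[I]a, z) u_i(z)$. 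Next I would regroup this sum over $Z_I = \bigcup_{h \in I} Z[ha]$ (up to the action $a$): writing $\sum_{z \in Z_I} = \sum_{h \in I} \sum_{z \in Z[ha]}$ and noting $z[I] = h$ for $z \in Z[h]$, this becomes $\frac{1}{\pi^{\xi^t}(I)} \sum_{h \in I} \pi_{-i}^{\sigma^t}(h) \sum_{z \in Z[ha]} \pi^{\sigma^t}(ha, z) u_i(z)$, which is exactly $v_i^{\sigma^t}(I, a) / \pi^{\xi^t}(I)$ by the definition of the counterfactual value. Combining the terms for $a$ and the weighted average over $\mathcal{A}(I)$ yields $\mathbb{E}_{z \sim \xi^t}[\hat{r}_i^t(I, a) \mid z \in Z_I] = (v_i^{\sigma^t}(I, a) - v_i^{\sigma^t}(I)) / \pi^{\xi^t}(I) = r_i^t(I, a) / \pi^{\xi^t}(I)$.

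Finally, to connect this to the training loss, I would invoke the standard fact that a squared-error regression $\mathcal{L}(\phi_i^t) = \mathbb{E}_{(I, \check{r}) \sim \mathcal{B}_i}[\sum_a (\hat{r}_i^t(I, a) - r(I, a \mid \phi_i^t))^2]$ is minimized pointwise (for a sufficiently expressive network, per information set $I$ in the buffer) when $r(I, a \mid \phi_i^t)$ equals the conditional mean of the target, i.e., $\mathbb{E}[\hat{r}_i^t(I, a) \mid I \text{ sampled}]$. Since a sample for $I$ is produced exactly when the episode reaches $Z_I$, this conditional mean is the quantity computed above, establishing the theorem.

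The main obstacle is bookkeeping the reach-probability factorizations carefully: one must be precise that the sampling distribution over episodes is $\pi^{\xi^t}$, that conditioning on $z \in Z_I$ divides by $\pi^{\xi^t}(I) = \sum_{h \in I} \pi^{\xi^t}(h)$, and that the $\pi_{-i}^{\sigma^t}(z[I])$ and $\pi^{\sigma^t}(z[I]a, z)$ factors in the numerator are the \emph{strategy} reach probabilities (not sampling ones), so no residual importance-sampling ratio survives in the numerator after the $\pi^{\xi^t}(z)$ cancellation — only the single $1/\pi^{\xi^t}(I)$ remains. The rest is a direct unrolling of definitions.
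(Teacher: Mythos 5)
Your proposal is correct and follows essentially the same route as the paper's proof: condition on $z \in Z_I$, cancel the $\pi^{\xi^t}(z)$ factor against the importance weight so that only $1/\pi^{\xi^t}(I)$ survives, identify the resulting sum with $v_i^{\sigma^t}(I,a)$, handle $\hat{v}_i^{\sigma^t}(I\mid z)$ by linearity, and subtract. Your closing remark that the squared-error loss is minimized by the conditional mean is left implicit in the paper but is the right justification for calling this the ``expected target value.''
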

\begin{proof}

We have
\begin{align*}
    \mathbb{E}_{z \sim \xi^t}\left[ \hat{v}_i^{\sigma^t}(I, a) \vert z \in Z_I\right] =& \frac{\mathbb{E}_{z \sim \xi^t}\left[\hat{v}_i^{\sigma^t}(I)\right]}{P_{z \in \xi^t}(z \in Z_I)} \\
    =& \frac{\sum_{z \in Z_I}^{}\pi^{\xi^t}(z) * \frac{\pi_{-i}^{\sigma^t}(z[I])\pi^{\sigma^t}(z[I]a, z)u_i(z)}{\pi^{\xi^t}(z)}}{\sum_{z \in Z_I}^{}\pi^{\xi^t}(z)} \\
    =& \frac{\sum_{z \in Z_I}^{}\pi_{-i}^{\sigma^t}(z[I])\pi^{\sigma^t}(z[I]a, z)u_i(z)}{\sum_{z \in Z_I}^{}\pi^{\xi^t}(z)} \\
    =& \frac{v_i^{\sigma^t}(I, a)}{\pi^{\xi^t}(I)},
\end{align*}
and
\begin{align*}
    \mathbb{E}_{z \sim \xi^t}\left[ \hat{v}_i^{\sigma^t}(I) \vert z \in Z_I\right]  &= \mathbb{E}_{z \sim \xi^t}\left[ \sum_{a \in \mathcal{A}(I)}^{}\sigma^t_i(I, a)\hat{v}_i^{\sigma^t}(I, a) \biggm| z \in Z_I\right] \\
    &= \sum_{a \in \mathcal{A}(I)}^{}\sigma_i^t(I, a)\mathbb{E}_{z \sim \xi^t}\left[ \hat{v}_i^{\sigma^t}(I, a) \vert z \in Z_I\right] \\
    &= \frac{\sum_{a \in \mathcal{A}(I)}^{}\sigma_i^t(I, a)v_i^{\sigma^t}(I, a)}{\pi^{\xi^t}(I)} \\
    &= \frac{v_i^{\sigma^t}(I)}{\pi^{\xi^t}(I)}.
\end{align*}
Hence,
\begin{align*}
    \mathbb{E}_{z \sim \xi^t}\left[ \hat{r}_i^{t}(I, a) \vert z \in Z_I\right] &= \mathbb{E}_{z \sim \xi^t}\left[ \hat{v}_i^{\sigma^t}(I, a) - \hat{v}_i^{\sigma^t}(I)\vert z \in Z_I\right] \\
    &= \mathbb{E}_{z \sim \xi^t}\left[ \hat{v}_i^{\sigma^t}(I, a) \vert z \in Z_I\right] - \mathbb{E}_{z \sim \xi^t}\left[ \hat{v}_i^{\sigma^t}(I) \vert z \in Z_I\right] \\
    &= \frac{v_i^{\sigma^t}(I, a)}{\pi^{\xi^t}(I)} - \frac{v_i^{\sigma^t}(I)}{\pi^{\xi^t}(I)} \\
    &= \frac{r_i^t(I, a)}{\pi^{\xi^t}(I)}.
\end{align*}
\end{proof}

\subsection{Proof of Theorem 2}
\begin{theorem}
    By using outcome sampling to collect data $(I, \check{r}_i^t(I))$ into a buffer $\mathcal{B}_i$ for player $i$ in iteration $t$, and training a neural network $r(I, a \mid \phi_i^t)$ on loss $\mathcal{L}(\phi_i^t)=\mathbb{E}_{(I, \check{r}_i^t(I))\sim \mathcal{B}_i}\left[\sum_{a\in \mathcal{A}(I)}^{}\left(\check{r}_i^t(I, a) - r(I, a \mid \phi_i^t)\right)^2\right] $, the expected target value of $r(I, a \mid \phi_i^t)$ for any sampled information set $I$ is given by:
\begin{align*}
    \mathbb{E}_{z \sim \xi^t}\left[ \check{r}_i^{t}(I, a) \vert z \in Z_I\right] =  \frac{r_i^t(I, a)}{\pi_{-i}^{\xi^t}(I) } =A_i^{\sigma^t}(I, a)
\end{align*}
\end{theorem}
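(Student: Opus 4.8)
The plan is to mirror the proof of Theorem~\ref{thm:1}, exploiting the fact that $\check{v}_i^{\sigma^t}(I,a\mid z)$ differs from the standard sampled counterfactual value $\hat{v}_i^{\sigma^t}(I,a\mid z)$ only by the multiplicative factor $\pi_i^{\xi^t}(z[I])$, which is constant over all $z\in Z_I$ by perfect recall (this is exactly the statement that $\pi_i^\sigma(I)$ is well-defined). The key algebraic fact I would use is the factorization $\pi^{\xi^t}(z)=\pi^{\xi^t}(z[I])\,\pi^{\xi^t}(z[I],z)$ together with the fact that the opponent-side sampler equals the true strategy, $\xi_{-i}^t=\sigma_{-i}^t$, so that $\pi_{-i}^{\xi^t}$ and $\pi_{-i}^{\sigma^t}$ may be interchanged freely.

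First I would write the conditional expectation as a ratio of sums over episodes through $I$, namely $\mathbb{E}_{z\sim\xi^t}[\check{v}_i^{\sigma^t}(I,a)\mid z\in Z_I]=\left(\sum_{z\in Z_I}\pi^{\xi^t}(z)\,\check{v}_i^{\sigma^t}(I,a\mid z)\right)\big/\left(\sum_{z\in Z_I}\pi^{\xi^t}(z)\right)$, using that the probability the sampled episode passes through $I$ is $\sum_{z\in Z_I}\pi^{\xi^t}(z)=\pi^{\xi^t}(I)$. In the numerator I would substitute the definition of $\check{v}$ and factor $\pi^{\xi^t}(z)=\pi^{\xi^t}(z[I])\,\pi^{\xi^t}(z[I],z)$, so the $\pi^{\xi^t}(z[I],z)$ in the denominator of $\check{v}$ cancels, leaving the summand $\pi^{\xi^t}(z[I])\,\pi^{\sigma^t}(z[I]a,z)\,u_i(z)$. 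Splitting $\pi^{\xi^t}(z[I])=\pi_i^{\xi^t}(z[I])\,\pi_{-i}^{\xi^t}(z[I])$, using perfect recall to pull the constant $\pi_i^{\xi^t}(z[I])=\pi_i^{\xi^t}(I)$ out of the sum, and rewriting $\pi_{-i}^{\xi^t}(z[I])=\pi_{-i}^{\sigma^t}(z[I])$, the remaining sum $\sum_{z\in Z_I}\pi_{-i}^{\sigma^t}(z[I])\,\pi^{\sigma^t}(z[I]a,z)\,u_i(z)$ is precisely $v_i^{\sigma^t}(I,a)$ by definition. In the denominator I would write $\pi^{\xi^t}(I)=\pi_i^{\xi^t}(I)\,\pi_{-i}^{\xi^t}(I)$, so the factors $\pi_i^{\xi^t}(I)$ cancel, yielding $\mathbb{E}_{z\sim\xi^t}[\check{v}_i^{\sigma^t}(I,a)\mid z\in Z_I]=v_i^{\sigma^t}(I,a)/\pi_{-i}^{\xi^t}(I)$.

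Next, by linearity of conditional expectation and the definition $\check{v}_i^{\sigma^t}(I\mid z)=\sum_{a\in\mathcal{A}(I)}\sigma_i^t(I,a)\,\check{v}_i^{\sigma^t}(I,a\mid z)$, I would obtain $\mathbb{E}_{z\sim\xi^t}[\check{v}_i^{\sigma^t}(I)\mid z\in Z_I]=v_i^{\sigma^t}(I)/\pi_{-i}^{\xi^t}(I)$. Subtracting gives $\mathbb{E}_{z\sim\xi^t}[\check{r}_i^{t}(I,a)\mid z\in Z_I]=(v_i^{\sigma^t}(I,a)-v_i^{\sigma^t}(I))/\pi_{-i}^{\xi^t}(I)=r_i^t(I,a)/\pi_{-i}^{\xi^t}(I)$. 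Finally, invoking the identity $r_i^t(I,a)=\pi_{-i}^{\sigma^t}(I)\,A_i^{\sigma^t}(I,a)$ stated in the preliminaries together with $\pi_{-i}^{\xi^t}(I)=\pi_{-i}^{\sigma^t}(I)$ collapses the ratio to $A_i^{\sigma^t}(I,a)$, as claimed.

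The main obstacle is purely the bookkeeping of reach probabilities: one must be careful that $\pi_i^{\xi^t}(z[I])$ really is constant across $z\in Z_I$ (so it can be extracted from both the numerator and denominator sums), and that the opponent-side sampling strategy coincides with the true strategy so that $\pi_{-i}^{\xi^t}$ may be replaced by $\pi_{-i}^{\sigma^t}$ at the two places it appears. Once these two facts are in hand, the argument is a direct analogue of Theorem~\ref{thm:1} with the single factor $\pi_i^{\xi^t}$ deleted from both sides of the ratio, and no new estimate or inequality is required.
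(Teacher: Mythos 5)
Your proposal is correct and follows essentially the same route as the paper's proof: write the conditional expectation as a ratio of sums over $z\in Z_I$, cancel the interval reach term $\pi^{\xi^t}(z[I],z)$, use perfect recall to pull $\pi_i^{\xi^t}(z[I])=\pi_i^{\xi^t}(I)$ out and $\xi_{-i}^t=\sigma_{-i}^t$ to identify $\pi_{-i}^{\xi^t}$ with $\pi_{-i}^{\sigma^t}$, then pass to $\check{v}_i^{\sigma^t}(I)$ by linearity, subtract, and invoke $r_i^t(I,a)=\pi_{-i}^{\sigma^t}(I)A_i^{\sigma^t}(I,a)$. The two cautionary points you flag (constancy of $\pi_i^{\xi^t}$ on $I$ and the opponent-side sampler equaling the true strategy) are exactly the two facts the paper cites in its fourth and fifth equalities, so no gap remains.
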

\begin{proof}
    We have
    \begin{align*}
        \mathbb{E}_{z \sim \xi^t}\left[ \check{v}_i^{\sigma^t}(I, a) \vert z \in Z_I\right] =& \frac{\mathbb{E}_{z \sim \xi^t}\left[\check{v}_i^{\sigma^t}(I)\right]}{P_{z \in \xi^t}(z \in Z_I)} \\
        =& \frac{\sum_{z \in Z_I}^{}\pi^{\xi^t}(z) * \frac{\pi^{\sigma^t}(z[I]a, z)u_i(z)}{\pi^{\xi^t}(z[I], z)}}{\sum_{z \in Z_I}^{}\pi^{\xi^t}(z)} \\
        =& \frac{\sum_{z \in Z_I}^{}\pi^{\xi^t}(z[I])\pi^{\sigma^t}(z[I]a, z)u_i(z)}{\pi^{\xi^t}(I)} \\
        =& \frac{\sum_{z \in Z_I}^{}\pi_i^{\xi^t}(z[I])\pi_{-i}^{\sigma^t}(z[I])\pi^{\sigma^t}(z[I]a, z)u_i(z)}{\pi_i^{\xi^t}(I)\pi_{-i}^{\sigma^t}(I)} \\
        =& \frac{\pi_i^{\xi^t}(I)\sum_{z \in Z_I}^{}\pi_{-i}^{\sigma^t}(z[I])\pi^{\sigma^t}(z[I]a, z)u_i(z)}{\pi_i^{\xi^t}(I)\pi_{-i}^{\sigma^t}(I)} \\
        =& \frac{v_i^t(I, a)}{\pi_{-i}^{\sigma^t}(I)},
    \end{align*}
    where we use $\xi^t_{-i} = \sigma_{-i}^t$ in the fourth equity and $\forall h \in I, \pi_i^{\xi^t}(I) = \pi_i^{\xi^t}(h)$ in the fifth equity. Besides, 
    \begin{align*}
        \mathbb{E}_{z \sim \xi^t}\left[ \check{v}_i^{\sigma^t}(I) \vert z \in Z_I\right]  &= \mathbb{E}_{z \sim \xi^t}\left[ \sum_{a \in \mathcal{A}(I)}^{}\sigma^t_i(I, a)\check{v}_i^{\sigma^t}(I, a) \biggm| z \in Z_I\right] \\
        &= \sum_{a \in \mathcal{A}(I)}^{}\sigma_i^t(I, a)\mathbb{E}_{z \sim \xi^t}\left[ \check{v}_i^{\sigma^t}(I, a) \vert z \in Z_I\right] \\
        &= \frac{\sum_{a \in \mathcal{A}(I)}^{}\sigma_i^t(I, a)v_i^{\sigma^t}(I, a)}{\pi^{\xi^t}(I)} \\
        &= \frac{v_i^{\sigma^t}(I)}{\pi_{-i}^{\sigma^t}(I)}.
    \end{align*}
    Hence,
    \begin{align*}
        \mathbb{E}_{z \sim \xi^t}\left[ \check{r}_i^{t}(I, a) \vert z \in Z_I\right] &= \mathbb{E}_{z \sim \xi^t}\left[ \check{v}_i^{\sigma^t}(I, a) - \check{v}_i^{\sigma^t}(I)\vert z \in Z_I\right] \\
        &= \mathbb{E}_{z \sim \xi^t}\left[ \check{v}_i^{\sigma^t}(I, a) \vert z \in Z_I\right] - \mathbb{E}_{z \sim \xi^t}\left[ \check{v}_i^{\sigma^t}(I) \vert z \in Z_I\right] \\
        &= \frac{v_i^{\sigma^t}(I, a)}{\pi_{-i}^{\sigma^t}(I)} - \frac{v_i^{\sigma^t}(I)}{\pi_{-i}^{\sigma^t}(I)} \\
        &= \frac{r_i^t(I, a)}{\pi_{-i}^{\sigma^t}(I)} \\
        &= A_i^{\sigma^t}(I, a).
    \end{align*}    
\end{proof}

\section{Description of the Games}
\label{app:games}
\textbf{\textit{Kuhn Poker}} is a simplified form of poker proposed by Harold W. Kuhn~\cite{kuhn1950simplified}. 
The game employs a deck of three cards, represented by J, Q, K. 
At the beginning of the game, each player receives a private card drawn from a shuffled deck and places one chip into the pot.
The game involves four kinds of actions: 1) fold, giving up the current game, and the other player gets all the pot, 2) call, increasing his/her bet until both players have the same chips, 3) bet, putting more chips to the pot, and 4) check, declining to wager any chips when not facing a bet. 
In \textit{Kuhn Poker}, each player has an opportunity to bet one chip. 
If neither player folds, both players reveal their cards, and the player holding the higher card takes all the chips in the pot.
The utility for each player is defined as the difference between the number of chips after playing and the number of chips before playing.

\textbf{\textit{Leduc Poker}} is a larger poker game first introduced in~\cite{southey2005bayes}. 
The game uses six cards that include two suites, each comprising three ranks (Js, Qs, Ks, Jh, Qh, Kh).
Similar to \textit{Kuhn Poker}, each player initially bets one chip, receives a single private card, and has the same set of action options. 
In \textit{Leduc Poker}, the game unfolds over two betting rounds. 
During the first round, players have an opportunity to bet two chips, followed by a chance to bet four chips in the second round. 
After the first round, one community card is revealed.
If a player’s private card is paired with the community card, that player wins the game; otherwise, the player holding the highest private card wins the game.

In Flop Hold'em Poker (\textit{\textbf{FHP}}), both players (P1 and P2) start each hand with 20,000 chips, dealt two private cards from a standard 52-card deck.
P1 initially places 100 chips to the pot, followed by P2 adding 50 chips.
P2 starts the first round of betting.
Then players alternate in choosing to fold, call, check or raise.
A maximum of three raises of 100 chips is allowed.
A round ends when a player calls if both players have acted.
After the first round, three community cards are dealt face up for all players to observe, and P1 starts a similar round of betting.
Unless a player has folded, the player with the best five-card poker hand, constructed from their two private cards and the three community cards, wins the pot.
In the case of a tie, the pot is split evenly.

\textbf{\textit{Liar's Dice (x)}} ($x=5,6$)~\cite{lisy2015online} is a dice game where each player gets an $x$-sided dice and a concealment cup. At the beginning of the game, each player rolls their dice under their cup, inspecting the outcome privately. The first player then begins bidding of the form $p$-$q$, announcing that there are at least $p$ dices with the number of $q$ under all the cups. The highest dice number $x$ can be treated as any number. Then players take turns to take action: 1) bidding of the form $p$-$q$, $p$ or $q$ must be greater than the previous player's bidding, 2) calling `Liar', ending the game immediately and revealing all the dices. If the last bid is not satisfied, the player calling `Liar' wins the game. The winner's utility is 1 and the loser -1.

\textbf{\textit{GoofspielImp (x)}} ($x=5,6$)~\cite{ross1971goofspiel} is a bidding card game. At the beginning of the game, each player receives $x$ cards numbered $1 \ldots x$, and there is a shuffled point card deck containing cards numbered $1 \ldots x$. The game proceeds in $x$ rounds. In each round, players select a card from their hand to make a sealed bid for the top revealed point card. When both players have chosen their cards, they show their cards simultaneously. The player who makes the highest bid wins the point card. If the bids are equal, the point card will be discarded. After $x$ rounds, the player with the most point cards wins the game. The winner's utility is 1 and the loser -1. We use a fixed deck of decreasing points. Besides, we use an imperfect information variant where players are only told whether they have won or lost the bid, but not what the other player played.

\textbf{\textit{Battleship (x)}} ($x=2,3$)~\cite{farina2019correlation} is a classic board game where players secretly place a ship on their separate grids of size $2 \times x$ at the start of the game. 
Each ship is $1 \times 2$ in size and has a value of 2. 
Players take turns shooting at their opponent's ship, and the ship that has been hit at all its cells is considered sunk.
The game ends when one player's ship is sunk or when each player has completed three shots. 
The utility for each player is calculated as the sum of the values of the opponent's sunk ship minus the sum of the values of their own lost ship.

We measure the sizes of the games in many dimensions and report the results in Table~\ref{tab:game}.
In the table, \emph{\#Histories} measures the number of histories in the game tree.
\emph{\#Infosets} measures the number of information sets in the game tree.
\emph{\#Terminal histories} measures the number of terminal histories in the game tree.
\emph{Depth} measures the depth of the game tree, \ie, the maximum number of actions in one history.
\emph{Max size of infosets} measures the maximum number of histories that belong to the same information set.
We use the method in~\cite{johanson2013measuring} to compute the sizes of \textit{FHP}.

\begin{table}[h]
	\centering
	\begin{tabular}{c|rrrrr}
			\toprule 
            \textbf{Game}                &  \textbf{\#Histories}   & \textbf{\#Infosets} & \textbf{\#Terminal histories} & \textbf{Depth}        & \textbf{Max size of infosets} \\ \midrule
            Kuhn Poker          &  $58$           & $12$           & $30$           & $6$            & $2$            \\ 
            Leduc Poker         &  $9,457$         & $936$          & $5,520$         & $12$           & $5$            \\ 
            Liar's Dice (5)     &  $51,181$        & $5,120$         & $25,575$        & $14$           & $5$            \\ 
            Liar's Dice (6)     &  $294,883$       & $24,576$        & $147,420$        & $16$           & $6$            \\ 
            GoofspielImp (5)    &  $26,931$        & $2,124$         & $14,400$        & $9$            & $46$           \\
            GoofspielImp (6)    &  $969,523$       & $34,482$        & $518,400$        & $11$            & $230$           \\ 
            Battleship (2)      &  $10,069$        & $3,286$         & $5,568$         & $9$            & $4$            \\ 
            Battleship (3)      &  $732,607$       & $81,027$        & $552,132$       & $9$            & $7$            \\       FHP      & $4.327*10^{12}$     &  $1.455*10^{9}$       &  $2.753*10^{12}$   &  $14$          &  $1,326$  \\ 
			\bottomrule
        \end{tabular}
    \caption{Sizes of the games.}
    \label{tab:game}
\end{table}

\section{Hyperparameters}
\label{app:hypers}

Experiments were conducted on a Linux server with an AMD EPYC 7702 64-Core CPU and 256 GB RAM. Hyperparameter conﬁgurations for each algorithm can be also found in the \texttt{configs} folder in the source code. 

\subsection{Hyperparameters for Neural CFR Variants}

The hyperparameters of OSDeepCFR are adopted from the OpenSpiel reproduction report~\cite{walton2021multi} and are summarized in Table~\ref{tab:neural_cfr}. Additionally, we increase the number of traversals to 10,000, as this adjustment was observed to enhance performance in our experiments. For VR-DeepDCFR+ and VR-DeepPDCFR+, We perform a grid search over $\gamma \in [1, 2, 3, 4, 5]$ and $\alpha \in [2, 2.3, 2.5, 2.7, 3]$, and select the hyperparameters that yields the lowest final exploitability.

\begin{table*}[h]
	\centering
    \begin{tabular}{lcccc}
        \toprule
        \textbf{Hyperparameter Name}     & \textbf{OS-DeepCFR} & \textbf{DREAM} & \textbf{VR-DeepDCFR+} & \textbf{VR-DeepPDCFR+} \\ \midrule
        
        num\_episodes&  10,000,000& 10,000,000& 10,000,000& 10,000,000\\
        advantage\_buffer\_size&  1,000,000&  1,000,000&  1,000,000&  1,000,000\\
        ave\_policy\_buffer\_size&  1,000,000&  1,000,000&  1,000,000&  1,000,000\\
        history\_value\_buffer\_size&  / &  1,000,000&  1,000,000&  1,000,000\\
        learning\_rate&  0.001&  0.001&  0.001&  0.001\\
        num\_traversals&  10,000&  10,000&  10,000&  10,000\\
        advantage\_network\_train\_steps&  750&  750&  750&  750\\
        advantage\_network\_batch\_size&  2,048&  2,048&  2,048&  2,048\\
        ave\_policy\_network\_train\_steps&  5,000&  5,000&  5,000&  5,000\\
        ave\_policy\_batch\_size&  2,048&  2,048&  2,048&  2,048\\
        history\_value\_network\_train\_steps&  /&  10,000&  10,000&  10,000\\
        history\_value\_batch\_size&  /&  2,048&  2,048&  2,048\\
        reinitialize\_advantage\_networks&  True&  True&  False&  False\\
        reinitialize\_imm\_regret\_networks& / &/ &/ &True\\
        num\_layers&  3&  3&  3&  3\\
        num\_hiddens&  64&  64&  64&  64\\
        linear\_weighted&  True & True& /& /\\
        use\_regret\_matching\_argmax&  True&  True&  True&  True\\
        epsilon&  0.6&  0.6&  0.6&  0.6\\
        alpha & / & / & 2 & 2.3\\
        gamma & / & / & 2 & 2 \\
        \bottomrule

	\end{tabular}
    \caption{Hyperparameters for Neural CFR Variants.}
    \label{tab:neural_cfr}
\end{table*}

\subsection{Hyperparameters for NFSP, QPG, and RPG}

The hyperparameters of NFSP are sourced from the OpenSpiel reproduction report~\cite{walton2021multi}. The hyperparameters of QPG and RPG are sourced by~\cite{farina2021model}. All hyperparameters are summarized in Table~\ref{tab:nfsp}.

\begin{table*}[h]
	\centering
    \begin{tabular}{lccc}
        \toprule
        \textbf{Hyperparameter Name}     & \textbf{NFSP} & \textbf{QPG} & \textbf{RPG}\\ \midrule
        
        num\_train\_episodes& 10,000,000& 10,000,000 & 10,000,000 \\
        num\_hidden& 64 & 64 & 64\\
        num\_layers& 3 & 3 & 3\\
        replay\_buffer\_capacity& 1,000,000 & / & /\\
        reservoir\_buffer\_capacity& 500,000& / & /\\
        min\_buffer\_size\_to\_learn& 1,000& / & /\\
        anticipatory\_param& 0.1& / & /\\
        batch\_size& 128& 64 & 128\\
        learn\_every& 128& / & /\\
        rl\_learning\_rate& 0.01 & 0.01 & 0.01\\
        sl\_learning\_rate& 0.01\ & 0.05 & 0.05\\
        update\_target\_network\_every& 19,200& / & /\\
        discount\_factor& 1.0& / & /\\
        epsilon\_decay\_duration& 10,000,000& / & /\\
        epsilon\_start& 0.06& / & /\\
        epsilon\_end& 0.001& / & /\\
        num\_critic\_before\_pi&/& 128 & 64\\
        \bottomrule

	\end{tabular}
    \caption{Hyperparameters for NFSP, QPG, and RPG.}
    \label{tab:nfsp}
\end{table*}

\section{Description of the Rule-based Agents in FHP}
\label{app:rule_agents}
We use five rule-based agents with different styles in \textit{FHP}~\citep{li2018opponent}. At every decision point, these agents compute the expected win rate under the assumption that opponents' hole cards and future public cards follow a uniform distribution, and then take actions according to their respective predefined decision rules. Table~\ref{tab:agents} summarizes the specific decision rules for each agent.

\begin{table}[h]
	\centering
	\begin{tabular}{ll}
			\toprule 
            Agent Name               &  Decision Rules    \\ \midrule
            Candid Statistician &  Raises with most good hands, folds with most weak hand, and calls with marginal hands. \\
            Loose Aggressive & Raises aggressively with a wide range of hands.\\
            Looss Passive & Calls with most hands, folds with weak hands, and rarely raises.\\
            Tight Passive & Calls with good hands, folds with most hands, and rarely raises.\\
            Tight Aggressive & Raises with most good hands, call with marginal hands, fold otherwise, and occasionally bluffs.  \\
			\bottomrule
        \end{tabular}
    \caption{Decision rules of the rule-based agents in FHP.}
    \label{tab:agents}
\end{table}

\section{Ablation Studies}
\label{app:ablation}
This section conducts a detailed ablation study to investigate how the proposed algorithm improves learning efficiency and overall performance. The proposed algorithms consist of three key components: bootstrapped cumulative advantages  estimation, approximating advanced CFR variants, and baseline-based variance reduction. To analyze the contribution of each component, we take VR-DeepPDCFR+ as the base method and remove each module individually, evaluating the resulting variants across four imperfect-information games.

\subsection{Ablation Study of Fitting Cumulative Advantages}
In each iteration, VR-DeepPDCFR+ bootstraps cumulative advantages using a deep neural network instead of cumulative counterfactual regrets. To verify the effectiveness of this design, we conduct an ablation study and analyze its impact. Specifically, VR-DeepPDCFR+ adds the baseline-enhanced sampled advantage $\bar{r}_i^t(I, a\mid z)$ to the advantage buffer at each iteration. In contrast, VR-DeepPDCFR+ w/o adv introduces the sampling weight $\frac{\pi_{-i}^{\sigma^t}(z[I])}{\pi^{\xi^t}(z[I])}$ to compute the baseline-enhanced sampled counterfactual regret, based on the same sampled value. Notably, the expectation of this quantity remains consistent with that of the sampled counterfactual regret $\hat{r}_i^t(I, a \mid z)$, ensuring the estimator remains unbiased despite the use of a baseline function. As shown in Figure~\ref{fig:deepcfr_abl_no_adv}, VR-DeepPDCFR+, which fits cumulative advantages, outperforms its counterpart VR-DeepPDCFR+ w/o adv, especially as the game complexity increases. In more complex games, opponent reach probabilities $\pi_{-i}^{\sigma^t}(z[I])$ tend to vary more significantly across information sets as the game progresses, making it harder for neural networks to fit cumulative counterfactual regrets accurately. This leads to reduced overall performance. These results suggest that fitting cumulative advantages facilitates more efficient learning and improves the overall performance of the algorithm.
\begin{figure*}[h]
    \centering
    \includegraphics[width=1\linewidth]{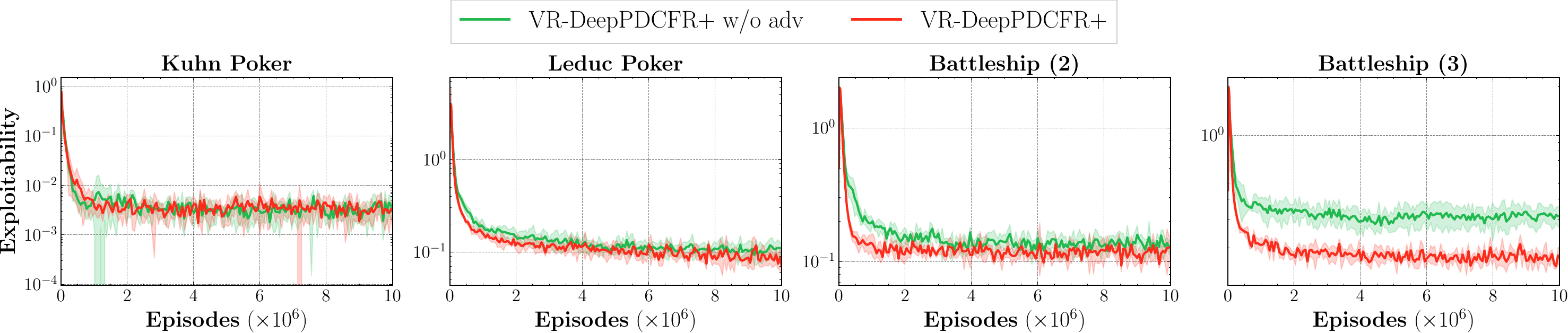}
    \caption{
        Ablation study of fitting cumulative advantages.
    }
    \label{fig:deepcfr_abl_no_adv}
\end{figure*}
\subsection{Ablation Study of Approximating Advanced CFR Variant}
The central motivation of this work is to approximate the behavior of advanced CFR variants using deep neural networks. To evaluate whether the performance gains of VR-DeepPDCFR+ stem from its approximation of PDCFR+, we modify the loss functions of the cumulative advantage and average strategy networks to instead approximate the behavior of Naive CFR or LinearCFR. For VR-DeepLinearCFR, the loss function of the cumulative advantage network is given by:
\begin{align*}
\mathcal{L}(\theta^t_i) = \mathbb{E}_{(I, \check{r} )\sim \mathcal{B}_{V, i}}\left[\sum_{a \in \mathcal{A}(I)}\left(R(I, a \mid \theta^{t-1}_{i} )
\frac{t-1}{t} +\check{r}(I, a)-R(I, a\mid\theta_i^t)\right)^2\right],
\end{align*}
ensuring a linearly weighted update of cumulative advantages. The loss function of the average strategy network adopts $\gamma=1$ to yield a linearly weighted average strategy. In both variants, the sampled advantage is replaced by the baseline-enhanced sampled advantage. As shown in Figure~\ref{fig:deepcfr_abl_cfr_variants}, VR-DeepPDCFR+ achieves the fastest convergence. Furthermore, VR-DeepLinearCFR outperforms VR-DeepCFR, a trend consistent with their corresponding tabular counterparts. These results demonstrate that approximating advanced CFR variants plays a crucial role in enhancing algorithm performance.
\begin{figure*}[h]
    \centering
    \includegraphics[width=1\linewidth]{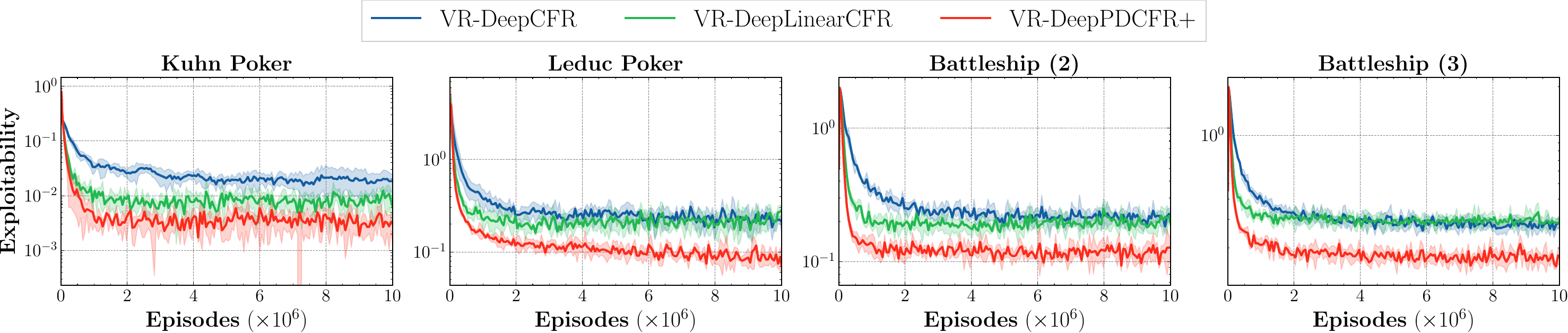}
    \caption{
        Ablation study of approximating advanced CFR variants.
    }
    \label{fig:deepcfr_abl_cfr_variants}
\end{figure*}
\subsection{Ablation of Variance Reduction}
Lastly, we study the impact of the baseline-based variance reduction method introduced in VR-DeepPDCFR+. While DeepPDCFR+ adds the sampled advantage $\check{r}_i^t(I, a\mid z)$ to the advantage buffer at each iteration, VR-DeepPDCFR+ uses the baseline-enhanced sampled advantage $\bar{r}_i^t(I, a\mid z)$. As shown in Figure~\ref{fig:deepcfr_abl_no_baseline}, VR-DeepPDCFR+ converges faster than DeepPDCFR+. This improvement may result from the reduced variance of sampled advantages due to the baseline, which makes each strategy update less susceptible to randomness and allows the average strategy to approach the Nash equilibrium more efficiently.
\begin{figure*}[h]
    \centering
    \includegraphics[width=1\linewidth]{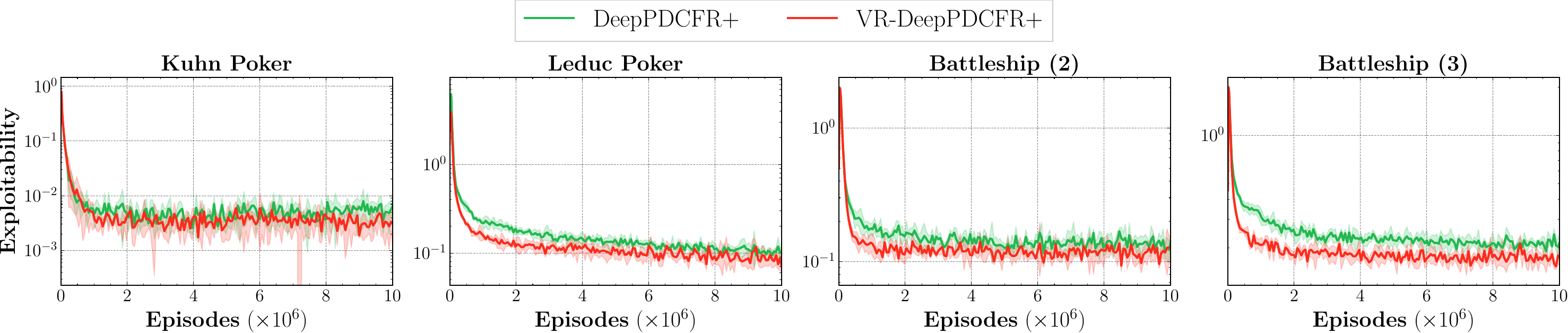}
    \caption{
        Ablation study of variance reduction.
    }
    \label{fig:deepcfr_abl_no_baseline}
\end{figure*}

\end{document}